
\documentclass[hyperref]{article}

\usepackage{microtype}
\usepackage{graphicx}
\usepackage{subfigure}
\usepackage{booktabs} 

\usepackage{hyperref}


\usepackage[accepted]{icml2022}


\usepackage{amsmath}
\usepackage{amssymb}
\usepackage{mathtools}
\usepackage{amsthm}

\usepackage[capitalize,noabbrev]{cleveref}

\theoremstyle{plain}
\newtheorem{theorem}{Theorem}[section]
\newtheorem{example}{Example}[section]
\newtheorem{proposition}[theorem]{Proposition}
\newtheorem{lemma}[theorem]{Lemma}

\newtheorem{definition}[theorem]{Definition}
\newtheorem{assumption}[theorem]{Assumption}
\theoremstyle{remark}

\newcommand{\bx}{\mathbf{x}}
\newcommand{\by}{\mathbf{y}}
\newcommand{\bz}{\mathbf{z}}
\newcommand{\bX}{\mathbf{X}}
\newcommand{\bZ}{\mathbf{Z}}
\newcommand{\bbR}{\mathbb{R}}
\newcommand{\bbN}{\mathbb{N}}
\newcommand{\bbE}{\mathbb{E}}
\newcommand{\calL}{\mathcal{L}}
\usepackage[textsize=tiny]{todonotes}

\DeclareMathOperator*{\argmin}{arg\,min}

\pagestyle{myheadings}
\icmltitlerunning{Reconstructing Training Data From Bayesian Posteriors and Trained Models}

\begin{document}

\twocolumn[
\icmltitle{On Reconstructing Training Data From Bayesian Posteriors and Trained Models}



\icmlsetsymbol{equal}{*}

\begin{icmlauthorlist}
\icmlauthor{George Wynne}{yyy}
\end{icmlauthorlist}

\icmlaffiliation{yyy}{Alan Turing Institute}

\icmlcorrespondingauthor{}{gwynne23@turing.ac.uk}

\icmlkeywords{Machine Learning, ICML}

\vskip 0.3in
]


\printAffiliationsAndNotice{}  

\begin{abstract}
Publicly releasing the specification of a model with its trained parameters means an adversary can attempt to reconstruct information about the training data via training data reconstruction attacks, a major vulnerability of modern machine learning methods. This paper makes three primary contributions: establishing a mathematical framework to express the problem, characterising the features of the training data that are vulnerable via a maximum mean discrepancy equivalance and outlining a score matching framework for reconstructing data in both Bayesian and non-Bayesian models, the former is a first in the literature. 
\end{abstract}

\section{Introduction}\label{sec:introduction}
It is common for trained models to be released to the public where both the architecture is known as well as the trained parameters, or in the case of Bayesian models the likelihood and priors are known and posterior samples are released. This could be for benchmarking purposes or for general contributions towards open source software. Examples include simple benchmark neural networks such as Resnet \citep{He_2016_CVPR}, the \texttt{posteriordb} database \citep{posteriordb} and large open source diffusion models \citep{Rombach_2022_CVPR}.

It is known that such public trained models can leak various notions of information about the training sets they were trained on. Examples include membership inference \citep{membership_inference}, attribute inference \citep{attribute_inference} and training data reconstruction attacks (DRA) \citep{Haim2022,buzaglo2023deconstructing,Loo2023,Runkel2024,Guo2022_bounding,Hayes203_dpsgd,Kaissis2023_hypothesis}. The latter is the focus of this paper and it is the most potent attack since it attempts to fully reconstruct the training data used for the model.

The vast majority of work regarding training data reconstruction attacks has focused on neural networks for classifying images where the aim has been perfect reconstruction of the images \citep{Haim2022,Guo2022_bounding,Loo2023,Zhu2019_leakage}. Other sorts of data and other models, for instance regression tasks or Bayesian models have so far received minimal attention.

Furthermore, although there is a growing volume of work on producing more elaborate and cunning data reconstruction attacks, the question of when it is even possible to recover training data has not been fully interrogated. Indeed, failure to reconstruct training data from a given model is viewed as a \textit{failure of the reconstruction method}, rather than a \textit{property of the model architecture itself}. 

Both these issues - the focus on a limited model class and the wider question of when training data reconstruction attacks should even work - can be encapsulated succinctly by a thought experiment involving an almost trivially simple model. Suppose we were performing classical linear regression with training data set $\{(0,0),(1,1)\}$ resulting in the model $f(x) = x$. If this model is released, along with its training procedure, then the adversary cannot possibly recover the exact training data because other datasets will result in the same trained model, for example $\{(1,1),(2,2)\}$. This shows that if the \textit{sufficient statistic} of the model is not equal to the original training data set then one cannot hope to achieve perfect reconstruction. But what about the inbetween case where more information about the training data is used in the model, can more information about the training data be reconstructed?

This paper aims to answer this question in a quantitative manner and presents the following main contributions. 
\begin{itemize}
	\item A statistical formulation of the training data reconstruction problem in terms of empirical measures of the training data, facilitating the use of statistical divergences to understand how vulnerable a training data set is.
	
	\item A novel attack is devised for Bayesian models, so far absent in the literature, revealing vulnerabilities in such models which were not known.
	
	\item A theorem which characterises the features of training data that can be recovered in a data reconstruction attack and how these features depend on the features of the data used in the model.
	
	\item A simple method to link the reconstructions methods that exist in the literature for non-Bayesian models to the proposed reconstruction method for Bayesian models, to show the perspective presented in this paper is general. 
\end{itemize}

The rest of the paper is structured as follows. Section \ref{sec:goals} will outline the threat model and the mathematical framework of data reconstruction. Section \ref{sec:Bayesian} covers the case of Bayesian models, a novelty in the DRA area, with Subsection \ref{subsec:bayesian_recovery} proposing a novel method for reconstructing data from Bayesian posteriors and Subsection \ref{subsec:bayesian_char} provides a result characterising what features of data can be recovered in the Bayesian case. Analogously, Section \ref{sec:non_bayesian} covers non-Bayesian models with Subsection \ref{subsec:non_bayesian_recovery} detailing how the reconstruction method in Subsection \ref{subsec:bayesian_recovery} naturally adapts to the non-Bayesian case and Subsection \ref{subsec:non_bayesian_char} provides a result characterising what features of data can be recovered in the non-Bayesian case. Section \ref{sec:numerics} provides a detailed numerical example and Section \ref{sec:conclusion} provides concluding remarks and future research directions. 

\subsection{Existing Work}
There is a fast maturing literature on adversarial machine learning and the variety of attacks that can be performed on models under a variety of threat models. For a broad overview consult \citet{Ponomareva2023_DPfy,Dwork2017}.

The focus of this paper shall be data reconstruction attacks (DRA), also known as training data recovery attacks. The majority of techniques are optimisation centric, meaning that the reconstruction is framed as an optimisation problem with a corresponding loss which is minimised. The most common loss function for the reconstruction task is formed by taking the norm of the gradient of the training loss function with respect to the model parameters \citep{Haim2022,buzaglo2023deconstructing,Loo2023}, see Section \ref{sec:non_bayesian}. The idea is that is this is zero then the training loss function evaluated at the released model parameters and reconstructed data will be zero, indicating that the reconstructed data fits the model well. 

Typically, the literature focuses on a particular class of models. Most commonly it is neural networks used for image classification, since here it is visually easy to inspect when a DRA has succeeded \citep{Loo2023,Haim2022,Guo2022_bounding,Runkel2024}.

There are a variety of assumptions made about the threat model. Papers which assume that the adversary has access to the same distribution as the training data include \citet{Kaissis2023_hypothesis,Balle2022_informed,Hayes203_dpsgd}. Our paper assumes that the adversary does not have such information and other papers which also assume the adversary does not have such information include \citet{Haim2022,Loo2023,buzaglo2023deconstructing,Runkel2024}.

Though not directly related, but still heavily lying in the theme of analysing the relationship between datasets and trained models, are the topic of coresets and data distillation, for reviews see \citet{winter2023_future,sachdeva_2023_distillation}. These areas attempt to find datasets which are typically smaller than the training data but still produce the same trained model. Therefore, minus the requirement to produce a smaller dataset, coreset and data distillation algorithms actually bare a strong resemblance to DRA algorithms which aim to discover the data which produces the trained model. The key difference is what is known to the user or adversary. Coreset algorithms attempt to find a smaller dataset to produce the same trained model \textit{without} training the model where as in DRA one starts with the trained model and attempts to find the dataset which trained it. 

Finally, a topic related to DRA is sufficient statistics \citep{Casella2024} - the minimum amount of information, the sufficient statistic, to characterise a model. This is intrinsically related to DRA attacks as the attacker aims to find the data which characterises the trained model. For the Bayesian case it is the notion of Bayesian sufficiency which is relevant to DRA \citep{Blackwell1982,Bernardo1994}.

\subsection{Preliminaries}
This subsection shall establish  notation and assumptions which permeate the rest of the paper. 

\textbf{General notation:} Bold capital letters will denote datasets with lowercase letters denoting individual data samples e.g. $\mathbf{X} = \{\mathbf{x}_{n}\}_{n=1}^{N}$ and non-bold lowercase will denote components of data samples e.g. $\mathbf{x}_{n} = (x_{n},y_{n})$. Capital $M,N\in\bbN$ will denote data set sizes, $\bX = \{\bx_{n}\}_{n=1}^{N}$ a training data set and $\bZ = \{\bz_{m}\}_{m=1}^{M}$ pseudo-data set to be optimised in an attempt to reconstruct $\bX$. Scalar weights for the pseudo-data are denoted $w = (w_{1},\ldots,w_{M})\in\bbR^{M}$. Capital letters will denote measures e.g. $P$ and the un-normalised empirical distribution of the training data is denoted $P_{\bX} = \sum_{n=1}^{N}\delta_{\bx_{n}}$ and the un-normalised weighted distribution based on the pseudo-data $P_{w,\bZ} = \sum_{m=1}^{M}w_{m}\delta_{\bz_{m}}$. Expectation with respect to measures will be denoted $\bbE_{P}[f(\mathbf{x})]$ where $\bx$ is the random variable in the expectation. For a parameter $\theta$ in a subspace $\Theta\subset\bbR^{d}$ and a function $f\colon\Theta\rightarrow\bbR$ gradients with respect to $\theta$ evaluated at a point $\theta_{0}$ are denoted by $\nabla_{\theta}f(\theta_{0})\in\bbR^{d}$. 

\textbf{Bayesian model notation:} The unknown parameter in the Bayesian setting is denoted $\theta$ and will lie in parameter space $\Theta$ with prior $\pi_{0}$ and likelihood function $l$ taking a parameter and a data sample as input, $l(\theta, \bx)$. For a training data set $\bX=\{\bx_{n}\}_{n=1}^{N}$ the full likelihood is denoted $L(\theta,\bX )  = \prod_{n=1}^{N}l(\theta,\bx_{n})$ and for a pseudo-data set $\bZ=\{\bz_{m}\}_{m=1}^{M}$ and weights $w \in\bbR^{M}$ the weighted likelihood based on pseudo-data is $L(\theta,w,\bZ) = \prod_{m=1}^{M}l(\theta,\bz_{m})^{w_{m}}$. The posterior based on $\bX$ is then $\pi_{\bX}\propto L_{\bX}\cdot\pi_{0}$ and the posterior based on $\bZ$ and $w$ is $\pi_{w,\bZ}\propto L_{w,\bZ}\cdot\pi_{0}$. 

\textbf{Non-Bayesian model notation:} A model will be denoted $F$ with final trained parameter $\theta^{*}$, meaning that $F(\theta^{*})$ denotes the trained model which itself would have inputs. The loss used for training is denoted $l(\theta,\bx)$ with $L(\theta, \bX) = \sum_{n=1}^{N}l(\theta,\bx_{n})$ denoting the accumulation of the loss over the entire training set and $L(\theta, w,\bZ) = \sum_{m=1}^{M}w_{m}l(\theta, \bz_{m})$ the weighted loss over the psuedo-data set. This notation surpresses the dependance on $F$. For example, one could have a regression model with with $\bX = \{\bx_{n}\}_{n=1}^{N}$, $\bx_{n} = (x_{n},y_{n})$ and squared-error loss $l(\theta,\bx) = (F(\theta)(x)-y)^{2}$. If a regularizer is used during the training procedure it is denoted $R(\theta)$. The combination of the loss and regularizer is denoted $\calL(\theta,\bX) = L(\theta, \bX) + R(\theta)$ and $\calL(\theta,w,\bZ) = L(\theta,w,\bZ) + R(\theta)$.

\textbf{Maximum mean discrepancy:} Maximum mean discrepancy is a kernel-based discrepancy between two measures \citep{gretton2012,muandet2017}. A user-chosen kernel is used to form the discrepancy and the choice of kernel dictates the features of the two measures that are compared. Kernels shall be denoted $k(\bx,\bx')$ and for two measures $P,Q$ the MMD is defined as 
\begin{align}
	\text{MMD}_{k}(P,Q)^{2} = \bbE_{P\times P}&[k(\bx,\bx')] + \bbE_{Q\times Q}[k(\bx,\bx')]\nonumber\\
	& - 2\bbE_{P\times Q}[k(\bx,\bx')] \label{eq:MMD_expansion},
\end{align}
which, given i.i.d.\ samples from $P,Q$, is easily estimated using empirical sums. 

It is known that every kernel can be written as $k(\bx,\bx') = \langle \varphi(\bx),\varphi(\bx')\rangle_{H}$ for some Hilbert space $H$ and some map $\varphi$ into $H$ \citep{Steinwart2008}. Such a $H$ is called a feature space and $\varphi$ is called a feature map and MMD compares $P,Q$ by using the features that $\varphi$ extracts. This is made explicit by the reformulation \citep[Lemma 4]{gretton2012} of the MMD as 
\begin{align}
	\text{MMD}_{k}(P,Q) = \lVert \bbE_{P}[\varphi(\bx)] - \bbE_{Q}[\varphi(\bx)] \rVert_{H}.\label{eq:MMD_hilbert}
\end{align}

This result shows that MMD compares the expectations of the features maps under the two measures in question, highlighting that if a feature map is more expressive then the MMD will be more discerning between the two measures. 

For example, consider when the data lies in $\bbR$ with $\varphi(\bx) = (1,\bx,\bx^{2})$ and $H = \bbR^{3}$, then the MMD will compare $\bbE_{P}[\bx^{r}]$ with $\bbE_{Q}[\bx^{r}]$ for $r = 0,1,2$, meaning the discrepancy can identify differences between measures up to the second moment. 

If a kernel is used which is able to identify arbitrary differences between measures then the kernel is called characteristic and the corresponding MMD is a valid distance \citep{Sriperumbudur2011}. This means that the feature map of the kernel has enough features to perfectly characterise any measure. For example, the Gaussian kernel essentially compares all moments of the input data and therefore is characteristic \citep{Steinwart2006}. 

\section{Threat Model and Adversary Goal}\label{sec:goals}
This section shall introduce the threat model and make a concrete definition of the goal of the adversary. Importantly, this mathematical formulation is more generel than the ``exact recovery'' scenario commonly studied, often implicitly,   in the literature and places an emphasis on how partially recovering the training set distribution still gives the adversary information, even when they don't reconstruct the exact training data set. 

\subsection{Threat Model} It is assumed that the adversary has white box access to the model, meaning the adversary knows the architecture of the model and may query all parts of it. Additionally, for the Bayesian case, samples from the posterior are given to the adversary and in the non-Bayesian case the trained parameter set is given to the adversary. These assumptions mirror the scenario for when code of a model is publicly released along with posterior samples or trained parameters. 

More specifically, in the Bayesian case the adversary can query the likelihood function $l$ and prior $\pi_{0}$ and their gradients. Whereas for the non-Bayesian case the adversary can query the model $F$, the loss function $L$ and the regularizer $R$ and their gradients. Knowledge of the model specification includes knowing the dimensionality of the training data, for example in the regression case with a data point $\bx = (x,y)$ where $x\in\bbR^{d}$ and $y\in\bbR$ it is assumed that $d$ is known since this is part of the model architecture. 

No knowledge about the distribution of the training data is assumed, in particular it is not assumed that the adversary has access to samples from the distribution which the training data came from. Critically, this assumption is different from many papers in the area of DRA \citet{Kaissis2023_hypothesis,Balle2022_informed,Hayes203_dpsgd} and means that the attacker entirely relies upon the recovery algorithm to produce insights into the unknown training data. 

\subsection{Adversary Goal} 
The following is the mathematical description of the data reconstruction problem that will be used throughout this paper. 
\begin{definition}\label{def:goal}
	If $\bX = \{\bx_{n}\}_{n=1}^{N}$ is the training data set then the aim of the adversary is to approximate the empirical distribution $P_{\bX} = \sum_{n=1}^{N}\delta_{\bx_{n}}$ with $P_{w,Z} = \sum_{m=1}^{M}w_{m}\delta_{\bz_{m}}$ where $w = \{w_{m}\}_{m=1}^{M}$ are scalar weights, $\bZ = \{\bz_{m}\}_{m=1}^{M}$ are pseudo-data points and $M$ is an adversary chosen constant.
\end{definition} 

If this goal is achieved perfectly, meaning $P_{\bX} = P_{w,\bZ}$, then $N=M$, $w_{m} = 1\:\forall m$ and $\bZ = \bX$ so the adversary has perfectly reconstructed the training data. However, even if perfect reconstruction isn't achieved but $P_{\bX}$ and $P_{w,\bZ}$ are still somehow close then the adversary will still have gained some information about the training data distribution without violating the mathematical framework. A natural question is in what metric or divergence is the approximation taken? This is a choice by the attacker and the choice used in our paper is explained in Definition \ref{def:Bayes_reco}.

Weights are used by the adversary to make the reconstruction task easier and to match the scales. If weights were not used then there would always be a large difference between the target and approximate empirical measure whenever $M$ and $N$ are different, which is likely to happen as $N$ is unknown to the adversary. 

The following example emphasises how the empirical measure approach allows for broader notions of reconstruction that are helpful to an adversary.

\begin{example}
	Suppose a classifier is trained on cats and dogs and the adversary has ran a reconstruction algorithm which produces images of cats and dogs but not the \textit{exact} images used in the training data. This outcome is still an approximation to the empirical distribution and the adversary is still learning the sort of data to expect from the training data, without performing perfect reconstruction.
\end{example}

\section{Recovering Training Data from Bayesian Posteriors}\label{sec:Bayesian}
This section shall outline two primary contributions of the paper. First, a score matching method to reconstruct features of training data from Bayesian posteriors, which will later be shown to be a natural generalisation of an existing method to reconstruct data from non-Bayesian models. Second, a result to characterise which training data features can be reconstructed from a given Bayesian model, giving a characterisation of the potential performance of the reconstruction method. 

The former is the first of its kind in the literature, since the current literature focuses purely on non-Bayesian models. The latter aims to give a concrete characterisation of how model complexity impacts reconstruction algorithms as these two factors have been seen to be intimately related in numerical experiments but have not been analysed theoretically. 

\subsection{Recovering Training Data}\label{subsec:bayesian_recovery}
The idea of how to recover training data from a Bayesian posterior is a simple trick, achieved by reversing the logic for fitting statistical models. 

Typically, when fitting a generative statistical model a training data set is observed, for example some pictures of cats and dogs, and then parameters of the model are fit so that the model then produces samples similar to the observed data. A way of doing this without having to sample from the model at each step, and without needing normalising constants, is to use score-based methods such as Fisher divergence or its modifications \citep{hyvarinen2005,song2019sliced}. Indeed, using Fisher divergence, or other score-based divergences has become the defacto method when fitting generative models \citep{Song2020_how_to_train}. 

How does this standard set up relate to the reconstruction problem? In the reconstruction problem, the adversary has observed parameter samples from the posterior and wants to find the training data which would produce such samples. This is exactly the same as the above but with training data and parameters being re-labelled and the user uses the posterior based on psuedo-data and weights as their approximating score function, rather than a neural network of some kind. 

The discussion in the rest of our paper will focus on the standard Fisher divergence with analogous results for the sliced version, which has better numerical properties, presented in the Appendix. Analogous results for even more sophisticated score matching methods such as de-noising are left as future work. 

\begin{definition}
	The Fisher divergence between the posterior $\pi_{\bX}$ and the weighted poster $\pi_{w,\bZ}$ using weights and pseudo-data is 
	\begin{align}
		&\emph{FD}(\pi_{\bX},\pi_{w,\bZ}) \nonumber\\
		& = \frac{1}{2}\int \lVert \nabla_{\theta}\log\pi_{\bX}(\theta) - \nabla_{\theta}\log\pi_{w,\bZ}(\theta)\rVert_{\Theta}^{2}\mathrm{d}\pi_{\bX}(\theta).\label{eq:FD}
	\end{align}
\end{definition}

The advantage of the Fisher divergence is that the dependence on $\nabla\log\pi_{\bX}$, which is unavailable during an attack as its computation depends on the unknown data $\bX$, can be removed by an integration-by-parts trick, resulting in 
\begin{align}\label{eq:FD_IBP}
	\text{FD}(\pi_{\bX},\pi_{w,\bZ}) & = \bbE_{\pi_{\bX}}[\text{Tr}(\nabla_{\theta}^{2}\log\pi_{w,\bZ}(\theta))] \nonumber\\
	& + \frac{1}{2}\bbE_{\pi_{\bX}}[\lVert \nabla_{\theta}\log\pi_{w,\bZ}(\theta)\rVert^{2}] \\
	& + C,\nonumber
\end{align}
where $C$ is a constant that does not depend on $w,\bZ$ and so can be ignored when it comes to minimizing the FD. This relies on only very mild assumptions on the regularity of the score functions \citep{hyvarinen2005}.

As part of the threat model the following assumption is made.

\begin{assumption}\label{ass:bayes}
	The adversary has access to $T$ samples from the posterior $\pi_{\bX}$.
\end{assumption}

This makes the Fisher divergence, and its gradient, straight forward to approximate using the samples from $\pi_{\bX}$ and the gradients $\nabla\log\pi_{w,\bZ}$, which are assumed to be available within the threat model. This results in the estimator
\begin{align*}
	\text{FD}(\pi_{\bX},\pi_{w,\bZ})& \approx \frac{1}{T}\sum_{t=1}^{T}\text{Tr}(\nabla_{\theta}^{2}\log\pi_{w,\bZ}(\theta_{t})) \\
	& + \frac{1}{2T}\sum_{t=1}^{T}\lVert \nabla_{\theta}\log\pi_{w,\bZ}(\theta_{t})\rVert^{2} \\
	& + C.
\end{align*}

\begin{definition}\label{def:Bayes_reco}
	The training data reconstruction problem based on Fisher divergence is defined as
	\begin{align*}
		w,\bZ = \argmin_{w,\bZ}\emph{FD}(\pi_{\bX},\pi_{w,\bZ}).
	\end{align*}
\end{definition}

This invite a few comments. First, it shows that the parameters that are being optimised with respect to are exactly those appearing in Definition \ref{def:goal}. If the goal was perfectly achieved then the Fisher divergence would be zero. This objective gives an approach to recover training data that results in the same model as the one trained on the true, unknown training data. Though standard Fisher divergence has been used in this definition, sliced Fisher divergence could also be used as it has the same properties of being easily estimated given posterior samples \citep{song2019sliced}. The next subsection will describe how well this method is expected to work, given the features of the data the model uses. Finally, an adversary may wish to regularize the weights and psudeo-data to use any prior knowledge they may have, for example a total variation norm if they are trying to reconstruct images, this is common in the literature \citet{buzaglo2023deconstructing}.

\subsection{Characterisation of Reconstruction}\label{subsec:bayesian_char}
Before the main result of this subsection a motivating example is given. The aim of this example is to catalyse the question of what features of training data should one expect, or in fact even hope, to be vulnerable to reconstruction from a posterior by looking at an extremely simple model. A similar discussion was provided in \citet{Manousakas2020} in the context of psuedo-coresets.

\begin{example}[Gaussian mean location]\label{exp:mean_loc}
	Consider the Gaussian mean location model, the aim of which is to infer the mean of observed data $\{\mathbf{x}_{n}\}_{n=1}^{N}\subset\bbR^{d}$. Under a standard multivariate Gaussian prior $\pi_{0} = N(0,I)$ and standard Gaussian likelihood $l(\theta, \bx) = \exp(-\frac{1}{2}\lVert \theta - \bx\rVert^{2})$ the posterior is a multivariate Gaussian with mean 
	$\mu = \frac{1}{N+1}\sum_{n=1}^{N}\bx_{n}$ and covariance matrix $\Sigma = (N+1)^{-1}I$. 
	
	Now suppose that for some $M\in\bbN$ the data set $\bZ = \{\bz_{m}\}_{m=1}^{M}$ was observed and the likelihood was weighted using $w = (w_{m})_{m=1}^{M}\in\bbR^{m}$ but the same prior was used. Let $S_{w} = \sum_{m=1}^{M}w_{m}$ then the posterior would still be Gaussian with mean $\frac{1}{S_{w} + 1}\sum_{m=1}^{M}w_{m}\bz_{m}$ and covariance matrix $(S_{w} + 1)^{-1}I$.
	
	Looking at this, ones sees that regardless of the value of $M$ if you started with any data set such that $S_{w} = N$ and $\sum_{m=1}^{M}w_{m}\bz_{m} = \sum_{n=1}^{N}\bx_{n}$ then the same posterior would be recovered. This shows that only the total number of data samples $N$ and the sum of the data samples $\sum_{n=1}^{N}\bx_{n}$ is needed to recover the same posterior. 
	
	Therefore, if a reconstruction attack was performed, the intuition is that only the total number of points and sum of the training data could be recovered as, given the model specification, that is the minimal information needed to characterise the posterior. This is equivalent to the number of points and the sum satisfying Bayesian sufficiency for the posterior.
\end{example}

This example shows that for a simple model, only a simple statistic is needed to fully characterise the posterior. Therefore, if one is trying to optimise data to attain the same posterior one would struggle to reveal more information about the data beyond this simple statistic. 

Example \ref{exp:mean_loc} inspires the \textit{ansatz} that a characterisation of the features of training data that can be recovered from posterior samples should somehow depend on the complexity of the model. \textbf{The more complex a model, the more complex the statistic to characterise it, hence the more complex the data which can be recovered from the model}. The following theorem shows this is indeed the case by equating the Fisher divergence between the posterior based on training data and the posterior based on weighted pseudo-data with a MMD using a kernel whose features depend on the model. 

\begin{theorem}\label{thm:characterise}
	Let $\bX = \{\bx_{n}\}_{n=1}^{N}, \bZ = \{\bz_{m}\}_{m=1}^{M}$ be two data sets and $w = \{w_{m}\}_{m=1}^{M}$ a set of scalar weights. Let $\pi_{0}$ be a prior for an unknown parameter lying in $\Theta \subset \bbR^{d}$ and $l$ be a likelihood function. For  ${\pi_{\bX}(\theta)\propto\prod_{n=1}^{N}l(\theta,\bx_{n})\cdot\pi_{0}(\theta)}$ and ${\pi_{w,\bZ}(\theta)\propto\prod_{m=1}^{M}l(\theta,\bz_{m})^{w_{m}}\cdot\pi_{0}(\theta)}$  
	\begin{align*}
		\emph{FD}(\pi_{\bX},\pi_{w,\bZ}) = \frac{1}{2}\:\emph{MMD}_{k}(P_{\bX}, P_{w,\bZ})^{2} 
	\end{align*}
	where 
	\begin{align*}
		k(\bx,\bx') = \int_{\Theta}\langle\nabla_{\theta}\log l(\theta,\bx),\nabla_{\theta}\log l(\theta,\bx')\rangle_{\bbR^{d}} \mathrm{d}\pi_{\bX}(\theta)
	\end{align*}
	and $P_{\bX} = \sum_{n=1}^{N}\delta_{\bx_{n}}, P_{w,\bZ} = \sum_{m=1}^{M}w_{m}\delta_{\bz_{m}}$ are the un-normalised empirical data measures. 
\end{theorem}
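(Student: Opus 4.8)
The plan is to exhibit an explicit feature map for the kernel $k$ and then recognise the Fisher divergence as the squared norm of a difference of kernel mean embeddings, so that the identity drops out of the Hilbert-space reformulation \eqref{eq:MMD_hilbert}.

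First I would write out the score functions. Since $\log\pi_{\bX}(\theta) = \sum_{n=1}^{N}\log l(\theta,\bx_{n}) + \log\pi_{0}(\theta) + c_{1}$ and $\log\pi_{w,\bZ}(\theta) = \sum_{m=1}^{M}w_{m}\log l(\theta,\bz_{m}) + \log\pi_{0}(\theta) + c_{2}$ for normalising constants $c_{1},c_{2}$ independent of $\theta$, taking $\nabla_{\theta}$ kills both the constants and, crucially, the common prior term in the difference:
\begin{align*}
	\nabla_{\theta}\log\pi_{\bX}(\theta) - \nabla_{\theta}\log\pi_{w,\bZ}(\theta) = \sum_{n=1}^{N}\nabla_{\theta}\log l(\theta,\bx_{n}) - \sum_{m=1}^{M}w_{m}\nabla_{\theta}\log l(\theta,\bz_{m}).
\end{align*}
This cancellation of $\nabla_\theta\log\pi_0$ is the only place the product-likelihood-times-shared-prior structure of the posteriors is used.

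Next I would set $H = L^{2}(\Theta,\pi_{\bX};\bbR^{d})$ with inner product $\langle f,g\rangle_{H} = \int_{\Theta}\langle f(\theta),g(\theta)\rangle_{\bbR^{d}}\,\mathrm{d}\pi_{\bX}(\theta)$ and take the feature map $\varphi(\bx) = \big(\theta\mapsto\nabla_{\theta}\log l(\theta,\bx)\big)$, so that $\langle\varphi(\bx),\varphi(\bx')\rangle_{H}$ is precisely the kernel $k$ of the theorem, and $P\mapsto\bbE_{P}[\varphi]$ gives the (un-normalised) kernel mean embeddings $\bbE_{P_{\bX}}[\varphi(\bx)] = \sum_{n=1}^{N}\varphi(\bx_{n})$ and $\bbE_{P_{w,\bZ}}[\varphi(\bx)] = \sum_{m=1}^{M}w_{m}\varphi(\bz_{m})$. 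Substituting the displayed score difference into the definition \eqref{eq:FD} and pulling the finite sums out of the integral yields
\begin{align*}
	\text{FD}(\pi_{\bX},\pi_{w,\bZ}) = \frac{1}{2}\Big\lVert \sum_{n=1}^{N}\varphi(\bx_{n}) - \sum_{m=1}^{M}w_{m}\varphi(\bz_{m})\Big\rVert_{H}^{2} = \frac{1}{2}\big\lVert \bbE_{P_{\bX}}[\varphi(\bx)] - \bbE_{P_{w,\bZ}}[\varphi(\bx)]\big\rVert_{H}^{2},
\end{align*}
which equals $\tfrac12\,\text{MMD}_{k}(P_{\bX},P_{w,\bZ})^{2}$ by \eqref{eq:MMD_hilbert}. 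As an alternative one can avoid invoking \eqref{eq:MMD_hilbert} and simply expand the squared norm into three double sums, matching them term by term against \eqref{eq:MMD_expansion}.

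The only genuinely necessary extra work is measure-theoretic bookkeeping rather than a real obstacle: one must assume enough regularity that $\varphi(\bx)\in H$ for every data point, i.e. $\int_{\Theta}\lVert\nabla_{\theta}\log l(\theta,\bx)\rVert^{2}\,\mathrm{d}\pi_{\bX}(\theta)<\infty$, so that $k$ is finite, the embeddings lie in $H$, and interchanging the finite sums with the integral is legitimate; these are the same mild conditions under which the Fisher divergence and its integration-by-parts form \eqref{eq:FD_IBP} are well posed. I would also remark that \eqref{eq:MMD_expansion}--\eqref{eq:MMD_hilbert} carry over verbatim to the un-normalised (and, for the difference, signed) measures $P_{\bX},P_{w,\bZ}$ since $P\mapsto\bbE_{P}[\varphi]$ is linear in $P$. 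Beyond stating these hypotheses, the proof is essentially the one-line prior cancellation above.
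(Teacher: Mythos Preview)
Your proposal is correct and follows essentially the same route as the paper: the paper also cancels the prior in the score difference and then expands the squared norm into three cross-terms to match \eqref{eq:MMD_expansion}, whereas you identify the feature map $\varphi(\bx)=\nabla_\theta\log l(\cdot,\bx)$ in $H=L^{2}(\Theta,\pi_{\bX};\bbR^{d})$ and invoke \eqref{eq:MMD_hilbert} directly---an alternative you yourself flag. The added remarks on integrability of $\varphi$ and on \eqref{eq:MMD_expansion}--\eqref{eq:MMD_hilbert} extending to un-normalised/signed measures are helpful clarifications the paper leaves implicit.
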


\begin{proof}
	First note 
	\begin{align}
		& \nabla_{\theta}\log\pi_{\bX}(\theta) - \nabla_{\theta}\log\pi_{w,\bZ}(\theta)\nonumber \\
		& = \sum_{n=1}^{N}\nabla_{\theta}\log l(\theta,\bx_{n}) - \sum_{m=1}^{M}w_{m}\nabla_{\theta}\log l(\theta,\bz_{m})\nonumber\\
		& = \bbE_{P_{\bX}}[\nabla_{\theta}\log l(\theta,\bx)] - \bbE_{P_{w,\bZ}}[\nabla_{\theta}\log l(\theta,\bz)].\label{eq:expand}
	\end{align}
	Next, substitute \eqref{eq:expand} into \eqref{eq:FD} and expand the squared norm as an inner product, not including the outer expectation with respect to $\pi_{\bX}$ for now,
	\begin{align}
		&\lVert\nabla_{\theta}\log\pi_{\bX}(\theta)\rVert_{\bbR^{d}}^{2} - 2\langle \nabla_{\theta}\log\pi_{\bX}(\theta),\nabla_{\theta}\log \pi_{w,\bZ}(\theta)\rangle_{\bbR^{d}}\nonumber \\
		& + \lVert\nabla_{\theta}\log\pi_{w,\bZ}(\theta)\rVert_{\bbR^{d}}^{2}\nonumber\\
		& = \bbE_{P_{\bX}\times P_{\bX}}[\langle \nabla_{\theta}\log l(\theta,\bx),\nabla_{\theta}\log l(\theta,\bx')\rangle_{\bbR^{d}}]\nonumber \\
		& - 2\bbE_{P_{\bX}\times P_{w,\bZ}}[\langle \nabla_{\theta}\log l(\theta,\bx),\nabla_{\theta}\log l(\theta,\bx')\rangle_{\bbR^{d}}]\label{eq:expansion_2}\\
		& + \bbE_{P_{w,\bZ}\times P_{w,\bZ}}[\langle \nabla_{\theta}\log l(\theta,\bx),\nabla_{\theta}\log l(\theta,\bx')\rangle_{\bbR^{d}}].\nonumber
	\end{align} 
	Finally, adding the expectation with respect to $\pi_{\bX}$ that is in the FD and noting that 
	\begin{align*}
		\int\langle \nabla_{\theta}\log l(\theta,\bx),\nabla_{\theta}\log l(\theta,\bx')\rangle_{\bbR^{d}}\mathrm{d}\pi_{\bX}(\theta) = k(\bx,\bx'),
	\end{align*}
	shows that each term in \eqref{eq:expansion_2} corresponds directly to a term in \eqref{eq:MMD_expansion} which completes the proof. 
\end{proof}

Multiple remarks are in order. A connection between data reconstruction and MMD was previously used in \citet{Loo2023} as a proof tool, rather than as an equivalence to analyse the potency of reconstruction attacks. The connection to Fisher divergence was not highlighted and nor were Bayesian models studied. A result connecting MMD and discrepancy between posteriors was derived by \citet{Wynne2023} using Bayes Hilbert spaces but this did not involve Fisher divegence. Theorem \ref{thm:characterise} shows that if a training data reconstruction attack aims to reduce the Fisher divergence between the posterior and pseudo-posterior in an attempt to recover the training data, then this is equivalent to minimising the MMD between the empirical training data measure and the empirical weighted pseudo-data measure. The kernel of this MMD has $L^{2}(\Theta, \pi_{\bX})$ as its hilbert space and $\varphi(\bx) = \nabla_{\theta}\log l(\cdot,\bx)$ as its feature map. \textbf{Therfore, the gradient of the log-likelihood function completely determines the features that can be reconstructed and the posterior determines the weight placed on these features}. 

The Gaussian mean location example can be continued to show how the \textit{ansatz}, which was divined from looking at the explicit formulas of the posterior, matches with the feature map in the above result. 

\begin{example}[Gaussian mean location continued]
	Under the Gaussian mean location model 
	\begin{align*}
		\varphi(\bx)(\theta) = \nabla_{\theta}\log l(\theta,\bx) = -(\theta - \bx),
	\end{align*}
	meaning that
	\begin{align*}
		\bbE_{P_{\bX}}[\varphi(\bx)(\theta)] & = -N\theta + \sum_{n=1}^{N}\bx_{n}\\
		\bbE_{P_{w,\bZ}}[\varphi(\bz)(\theta)] & = -\left(\sum_{m=1}^{M}w_{m}\right)\theta + \sum_{m=1}^{M}w_{m}\bz_{m}.
	\end{align*}
	Since MMD is equal to the difference between these two expressions in $L^{2}(\Theta,\pi_{\bX})$, see \eqref{eq:MMD_hilbert}, having zero FD, hence zero MMD, between the posterior and weighted posterior means the two above expressions are equal as functions of $\theta$ wherever $\pi_{\bX}$ has a non-zero value. In this example $\pi_{\bX}$ is Gaussian so it is non-zero everywhere. This implies that $N = \sum_{m=1}^{M}w_{m}$ and $\sum_{n=1}^{N}\bx_{n} = \sum_{m=1}^{M}w_{m}\bz_{m}$. This shows that after optimizing $w,\bZ$ with respect to FD (hence with respect to MMD by virtue of Theorem \ref{thm:characterise}) the information of the training set that can be recovered from $w,\bZ$ is the total number of points and the sum of the training points. This matches the intuition gained in the previous example by looking at the explicit expressions for the posterior.  
\end{example}

Theorem \ref{thm:characterise} shows that the more expressive the feature map, the more of the features of the training data can be recovered. The model and its derivative implicitly plays a role in the feature map as it is part of the likelihood function, which means that the more features the model extracts of the data the more expressive the feature map. For example, if the model is a neural network then increasing the depth and width increases the features extracted from the training data and therefore increases the features which can be reconstructed from the training data. This was observed numerically in the non-Bayesian case by  \citet{Haim2022,Loo2023}. \textbf{This poses an important conflict in model privacy as more features used by the model typically means better model performance but Theorem \ref{thm:characterise} shows that more features used by the model means more features of the training data can be recovered}. 

Theorem \ref{thm:characterise} highlights the impact that training data set size has on the training data reconstruction problem. Because the target measures are un-normalised, their norm, which is a measure of how ``complex'' they are, grows with the number of data points.  This can be seen explicitly as 
\begin{align*}\lVert P_{\bX}\rVert_{H}^{2} & = \sum_{n=1}^{N}k(\bx_{n},\bx_{n}) \\
	& = \sum_{n=1}^{N}\int \lVert \nabla_{\theta}\log l(\theta,\bx_{n})\rVert_{\Theta}^{2}\mathrm{d}\pi_{\bX}(\theta)\\
	& \xrightarrow{N\rightarrow\infty}\infty.
\end{align*}
This can intuitively be seen as an issue by considering stadard function approximation. If you had a fuction $f$ on $[0,1]$ that you were trying to approximate using a set method, it would be easier to approximate it if $\lVert f\rVert_{L^{2}([0,1])} = 1$ rather than $\lVert f\rVert_{L^{2}([0,1])} = 1000$ because the latter is more ``complex''.

This shows that unless the attacker is able to adapt the complexity of the approximating measure $P_{w,\bZ}$ then the un-normalised nature of the target measure $P_{\bX}$ will make it harder to recover the training data if the complexity of the initialisation of $P_{w,\bZ}$, dictated by $\sum_{m=1}^{M}w_{m}$, is far from the complexity of $P_{\bX}$. This provides an explanation for the numerics that were observed in the non-Bayesian case by \citet{Haim2022,Loo2023}. Using a model with more features was seen to combat the issue of larger training data sets causing worse training data reconstruction but an exact characterisation of the trade off is an open problem. 

This section shall conclude with a final worked example, further highlighting how the more features a model extracts from data the greater the features of the training set can be recovered.

\begin{example}[Bayesian linear regression]\label{exp:BLR}
	Consider Bayesian linear regression with training data $\bX = \{\bx_{n}\}_{n=1}^{N}$ where $\bx_{n} = (x_{n},y_{n})$ with $x_{n}\in\bbR^{d}, y_{n}\in\bbR$. A feature vector $\psi(\bx)\in\bbR^{d'}$ will be used as features. For example $d=1,d'=3$ and $\psi(x) = (1,x,x^{2})$. The pseudo-data will be denoted $\bZ = \{\bz_{m}\}_{m=1}^{M}$ with $z_{m} = (z_{m},u_{m}), z_{m}\in\bbR^{d},u_{m}\in\bbR$. A standard Gaussian multivariate prior is used for the unknown coefficients $\theta\in\Theta = \bbR^{d'}$ and the likelihood is Gaussian $l(\theta,\bx) \propto\exp(-\frac{1}{2}( \langle\theta,\psi(x)\rangle_{\bbR^{d'}} - y)^{2})$. 
	
	The feature map in the kernel for MMD is 
	\begin{align*}
		\varphi(\bx)(\theta) & = \nabla_{\theta}\log l(\theta,\bx)  \\
		& = -\psi(x)\psi(x)^{\top}\theta + \psi(x)y,
	\end{align*}
where $\psi(x)\psi(x)^{\top}\in\bbR^{d'\times d'}$ is the outer product of the features of the input data. Therefore, the two un-normalised expectations of the features with respect to the data distrbutions are
\begin{align*}
	\bbE_{P_{\bX}}[\varphi(\bx)(\theta)]  = & -\left(\sum_{n=1}^{N}\psi(x_{n})\psi(x_{n})^{\top}\right)\theta \\
	& + \sum_{n=1}^{N}\psi(x_{n})y_{n} \\
	\bbE_{P_{w,\bZ}}[\varphi(\bz)(\theta)] & = -\left(\sum_{m=1}^{M}w_{m}\psi(z_{m})\psi(z_{m})^{\top}\right)\theta \\
	& + \sum_{m=1}^{M}w_{m}\psi(z_{m})u_{m}.
\end{align*} 
If the FD is minimised to zero then these two expressions must be equal as functions of $\theta$. Setting $\theta = 0$, which is in the support of the posterior, gives $\sum_{n=1}^{N}\psi(x_{n})y_{n} = \sum_{m=1}^{M}w_{m}\psi(z_{m})y_{m}$ and hence also the expectations of the outer product of the features $\psi(x)\psi(x)^{\top}$ with respect to $P_{\bX}$ and $P_{w,\bZ}$ must be equal. 

Continuing the example of $\psi(x) = (1,x,x^{2})$ then the outer product of features captures $x^{r}$ for $r = \{0,1,2,3,4\}$. Therefore, if the FD is minimised to zero then the first five moments of the training data can be recovered. If the feature vector included higher polynomial moments then higher moments of the training data could be recovered. 
\end{example}

\section{Recovering Training Data from Non-Bayesian Models}\label{sec:non_bayesian}
This section will outline how the findings of the previous section can be applied to non-Bayesian models. It will be shown how the derivations coincide and generalise current methods in the literature. The adversary goal is still outlined in Definition \ref{def:goal} as being the reconstruction of the un-normalised empirical distribution of the training data set. 

\subsection{Recovering Training Data}\label{subsec:non_bayesian_recovery}
The move from Bayesian to non-Bayesian models is made with two ingredients. First, by viewing the final parameter obtained at the end of training, denoted $\theta^{*}\in\bbR^{d}$, as a posterior distribution that only consists of this parameter i.e. a Dirac measure on $\theta^{*}$ denoted $\delta_{\theta^{*}}$. Second, by using the relationship between the likelihood and prior pair with loss and regularizer. 

Starting with the Fisher divergence \eqref{eq:FD}, if one replaces $\mathrm{d}\pi_{\bX}$ with $\mathrm{d}\delta_{\theta^{*}}$ for a single parameter $\theta^{*}$ and replaces $\nabla_{\theta}\log\pi_{\bX}(\theta)$, $\nabla_{\theta}\log\pi_{w,\bZ}(\theta)$ with $\nabla_{\theta}\calL(\theta,\bX)$, $\nabla_{\theta}\calL(\theta,w,\bZ)$, respectively, then \eqref{eq:FD} becomes
\begin{align}
	&\lVert \nabla_{\theta}\calL(\theta^{*},\bX) - \nabla_{\theta}\calL(\theta^{*},w,\bZ)\rVert_{\bbR^{d}}.\label{eq:non_Bayes_FD}
\end{align}

This swap of $\log\pi_{\bX}(\theta)$ for $\calL(\theta,\bX)$ comes from how a log-likelihood can correspond to a loss function and a log-prior to a regularizer.

At this point one might want to try and minimize \eqref{eq:non_Bayes_FD} with respect to $w,\bZ$ to reconstruct $P_{\bX}$ but the term $\nabla_{\theta}\calL(\theta^{*},\bX)$ is intractable as it depends on the unknown data $\bX$. In the Bayesian case an integration-by-parts trick is used to remove the $\nabla_{\theta}\log\pi_{\bX}(\theta)$ term to make the divergence numerically tractable. In the current scenario the following assumption is made in the literature \citep{Haim2022,buzaglo2023deconstructing}.

\begin{assumption}\label{ass:param}
	The parameters that are released to the adversary $\theta^{*}\in\bbR^{d}$ satisfy $\nabla_{\theta}\calL(\theta^{*},\bX) = \mathbf{0}\in\bbR^{d}$.
\end{assumption}

In plain language, this assumption states the the model has been trained to a local minimum of the objective $\calL$. This highlights a natural trade off between Assumption \ref{ass:bayes} and Assumption \ref{ass:param}. In the former it is assumed that the adversary has samples from $\pi_{\bX}$, which naturally satisfy $\int\nabla_{\theta}\log\pi_{\bX}(\theta)\mathrm{d}\pi_{\bX}(\theta) = \mathbf{0}$, which is the Bayesian analogy to $\nabla_{\theta}\calL(\theta^{*},\bX) = \mathbf{0}$.

This assumptions leads to the following reconstruction problem.
\begin{definition}\label{def:non_Bayes_goal}
	Under Assumption \ref{ass:param} the reconstruction problem in the non-Bayesian case is 
	\begin{align}\label{eq:non_bayesian_obj}
		w,\bZ = \argmin_{w,\bZ}\lVert \nabla_{\theta}\calL(\theta^{*},w,\bZ)\rVert_{\bbR^{d}}.
		\end{align}
\end{definition}  

This is very similar to the minimisation targets which have been derived for training data reconstruction methods in the literature \citep{Haim2022,Loo2023, buzaglo2023deconstructing}. The main difference is that in the existing literature the weights are not viewed as an object of interest and are instead thrown away after optimisation, with focus purely on the pseudo-data. The entire un-normalised measure is not viewed as the overall object for reproduction in existing methods, instead the focus is on the psudeo-data reconstructing the true data perfectly e.g. perfect image reconstruction. In contrast, the weights play a critical role in Definition \ref{def:goal}, the primary goal of the adversary in the present context.

The objective \eqref{eq:non_bayesian_obj} is arrived at in the literature via more complex mathematics focusing around particular scenarios, for example KKT conditions in \citet{Haim2022}. The more simple logic in this section shows that it can instead be viewed as a natural consequence of starting at Fisher divergence and substituting in a Dirac mass centered on the trained parameters for the posterior. 

Equipped with the objective in Definition \ref{def:non_Bayes_goal}, the attacker then uses which ever minimisation method they prefer to obtain $w,\bZ$ in an attempt to fulfill the goal in Definition \ref{def:goal}. As was the case for the objective in the Bayesian case, an adversary may also want to regularize the data somehow given any prior knowledge they have.

\subsection{Characterisation of Reconstruction}\label{subsec:non_bayesian_char}
Analogous to how a characterisation in terms of MMD can be given to the objective in the Bayesian case, an equivalance can be drawn between \eqref{eq:non_Bayes_FD} and an MMD with a particular kernel.

\begin{theorem}\label{thm:non_Bayes_characterise}
	Let $\bX = \{\bx_{n}\}_{n=1}^{N},\bZ = \{\bz_{m}\}_{m=1}^{M}$ be two data sets and $w = \{w_{m}\}_{m=1}^{M}$ a set of scalar weights. Then, for $\theta^{*}\in\bbR^{d}$,
	\begin{align*}
		\emph{MMD}_{k}(P_{\bX},P_{w,\bZ})  & = \lVert\nabla_{\theta}\calL(\theta^{*},\bX) - \nabla_{\theta}\calL(\theta^{*},w,\bZ)\rVert_{\bbR^{d}}\\
		& = \lVert\nabla_{\theta}L(\theta^{*},\bX) - \nabla_{\theta}L(\theta^{*},w,\bZ)\rVert_{\bbR^{d}},
	\end{align*}
where 
\begin{align*}
	k(\bx,\bx') = \langle \nabla_{\theta}l(\theta^{*},\bx), \nabla_{\theta}l(\theta^{*},\bx')\rangle_{\bbR^{d}}.
\end{align*}
\end{theorem}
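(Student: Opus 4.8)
The plan is to reuse the Hilbert space reformulation of MMD in \eqref{eq:MMD_hilbert} with the $d$-dimensional feature map $\varphi(\bx) = \nabla_{\theta} l(\theta^{*},\bx) \in \bbR^{d}$, so that $H = \bbR^{d}$ and $k(\bx,\bx') = \langle \varphi(\bx),\varphi(\bx')\rangle_{\bbR^{d}}$ is exactly the stated kernel. The argument then has three short steps. First, cancel the regularizer: since $\calL(\theta,\bX) = L(\theta,\bX) + R(\theta)$ and $\calL(\theta,w,\bZ) = L(\theta,w,\bZ) + R(\theta)$ share the same $R$, taking gradients at $\theta^{*}$ and subtracting gives $\nabla_{\theta}\calL(\theta^{*},\bX) - \nabla_{\theta}\calL(\theta^{*},w,\bZ) = \nabla_{\theta}L(\theta^{*},\bX) - \nabla_{\theta}L(\theta^{*},w,\bZ)$, which is the second equality in the statement and reduces everything to the $L$ terms.

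Second, rewrite these gradients as feature means. Because $L(\theta,\bX) = \sum_{n=1}^{N} l(\theta,\bx_{n})$ and $L(\theta,w,\bZ) = \sum_{m=1}^{M} w_{m} l(\theta,\bz_{m})$, linearity of the gradient gives $\nabla_{\theta}L(\theta^{*},\bX) = \sum_{n=1}^{N}\nabla_{\theta}l(\theta^{*},\bx_{n}) = \bbE_{P_{\bX}}[\varphi(\bx)]$ and $\nabla_{\theta}L(\theta^{*},w,\bZ) = \sum_{m=1}^{M} w_{m}\nabla_{\theta}l(\theta^{*},\bz_{m}) = \bbE_{P_{w,\bZ}}[\varphi(\bz)]$, where $P_{\bX},P_{w,\bZ}$ are the un-normalised measures of Definition \ref{def:goal}. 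Third, apply \eqref{eq:MMD_hilbert}: $\text{MMD}_{k}(P_{\bX},P_{w,\bZ}) = \lVert \bbE_{P_{\bX}}[\varphi(\bx)] - \bbE_{P_{w,\bZ}}[\varphi(\bz)] \rVert_{\bbR^{d}}$, which by the previous step equals $\lVert \nabla_{\theta}L(\theta^{*},\bX) - \nabla_{\theta}L(\theta^{*},w,\bZ)\rVert_{\bbR^{d}}$, and by the first step equals the $\calL$ version, completing the chain of equalities.

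Equivalently, one can bypass \eqref{eq:MMD_hilbert} and mimic the proof of Theorem \ref{thm:characterise} verbatim: expand $\lVert\nabla_{\theta}L(\theta^{*},\bX) - \nabla_{\theta}L(\theta^{*},w,\bZ)\rVert^{2}$ as three inner-product terms $\bbE_{P_{\bX}\times P_{\bX}}$, $-2\,\bbE_{P_{\bX}\times P_{w,\bZ}}$, $\bbE_{P_{w,\bZ}\times P_{w,\bZ}}$ of $\langle\nabla_{\theta}l(\theta^{*},\bx),\nabla_{\theta}l(\theta^{*},\bx')\rangle_{\bbR^{d}}$ and match each to the corresponding term of \eqref{eq:MMD_expansion}; the simplification relative to the Bayesian case is that there is no outer integral over $\theta$, so the kernel is just the inner product evaluated at the single point $\theta^{*}$.

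There is essentially no hard step here; the only point needing a word of care is that the MMD identities \eqref{eq:MMD_expansion}–\eqref{eq:MMD_hilbert}, stated for measures, are being applied to the un-normalised (and possibly signed, via the weights $w$) measures $P_{\bX}$ and $P_{w,\bZ}$. This is harmless since both are finite linear combinations of Dirac masses, so the feature means $\bbE_{P_{\bX}}[\varphi(\bx)]$ and $\bbE_{P_{w,\bZ}}[\varphi(\bz)]$ are finite sums in $\bbR^{d}$ and both identities continue to hold by linearity, exactly as in the proof of Theorem \ref{thm:characterise}.
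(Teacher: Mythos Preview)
Your proposal is correct and follows essentially the same approach as the paper: cancel the regularizer to get the second equality, then identify $\nabla_{\theta}L(\theta^{*},\bX)$ and $\nabla_{\theta}L(\theta^{*},w,\bZ)$ as the feature means $\bbE_{P_{\bX}}[\varphi(\bx)]$ and $\bbE_{P_{w,\bZ}}[\varphi(\bz)]$ with $\varphi(\bx)=\nabla_{\theta}l(\theta^{*},\bx)$, and invoke \eqref{eq:MMD_hilbert} with $H=\bbR^{d}$. The extra remarks (the alternative expansion via \eqref{eq:MMD_expansion} and the note on un-normalised/signed measures) are sound but not needed for the argument.
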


\begin{proof}
	The proof is simple rearranging of sums. First, the second equality is immediate since $\calL$ can be replaced by $L$ because the $R$ terms in $\calL$ cancel out as they do not depend on the choice of $\bX,w,\bZ$. 
	
	Then, note $\nabla_{\theta}L(\theta^{*},\bX) = \bbE_{P_{\bX}}[\nabla_{\theta}l(\theta^{*},\bx)]$ and $\nabla_{\theta}L(\theta^{*},w,\bZ) = \bbE_{P_{w,\bZ}}[\nabla_{\theta}l(\theta^{*},\bx)]$ which proves the result using the identity \eqref{eq:MMD_hilbert} with $H=\bbR^{d}$ and $\varphi(\bx) = \nabla_{\theta}l(\theta^{*},\bx)$. 
\end{proof}
A similar result was used in a proof by \citet{Loo2023} but not with the additional perspective of using un-normalized measures. Theorem \ref{thm:non_Bayes_characterise} is the non-Bayesian analogy to Theorem \ref{thm:characterise}. The consequence is that the same remarks from the Bayesian case can be drawn for the non-Bayesian case. In particular, the more features in the model - for example more depth or width in a neural network - lead to more features being recovered in the reconstruction attack since more features are involved in the kernel in Theorem \ref{thm:non_Bayes_characterise}. This is because more features in a model means more expressive $\nabla_{\theta} l(\theta^{*},\cdot)$ functions and hence more discerning kernels. Additionally, the more points in the training data the more complex the task of reconstructing the training data and therefore the worse the reconstruction performance. 

Theorem \ref{thm:non_Bayes_characterise} explains the findings in the numerical experiments performed by \citet{Haim2022,buzaglo2023deconstructing}. In particular, \citet[Figure 7]{buzaglo2023deconstructing} shows the results of a reconstruction attack and how it depends on the size of the training set and number of neurons per layer of a network. The attack coincides with the one outlined in this section. It is shown that increasing the number of training points reduced reconstruction quality, explained by the discussion in Section \ref{sec:Bayesian}, and increasing the number of neurons per layer increases reconstruction quality, explained by Theorem \ref{thm:non_Bayes_characterise} as more neurons per layer means more features to be matched in the kernel feature map.

\section{Numerics}\label{sec:numerics}
This section shall present a simple example of employing the reconstruction method to recover data in a Bayesian linear regression example, using a model and posterior samples from \texttt{posteriodb} \citep{posteriordb} an open database of Bayesian models and posterior samples. Code is available at \url{https://github.com/ggcode-spec/score_data_reconstruction}. The sliced version of the Fisher divergence will be used due to its better numerical properties than standard Fisher divergence \citep{song2019sliced}. See the Appendix for results regarding sliced Fisher divergence analogous to those in Section \ref{sec:Bayesian}.

The intention of this section is to show that the framework presented in previous sections for data reconstructions goes beyond the ``perfect reconstruction'' implicit aim in the literature. Instead, by using the chracteration in Theorem \ref{thm:characterise} it will be shown that statistics of the training data can be extracted with the generic optimisation problem in Definition \ref{def:Bayes_reco} and that these statistics are captured by the weighted empirical measure, rather than the raw pseudo-data itself. 

The model is the \texttt{kidscore\_momiq} model in \texttt{posteriordb} \citep{posteriordb}. This model is featured in \citet[Chapter 3]{Gelman_Hill_2006} and involves predicting cognitive test scores of three and four year old children by using their mothers IQ test. This model is being used because the data and gold standard posterior samples of the model are easily available from \texttt{posteriordb} and the model is simple enough to have interpretable data reconstruction results, with the data being real life rather than synthetic. 

The model is Bayesian linear regression, with two unknown parameters $\theta = (\beta,\sigma)$ with $\beta\in\bbR^{2}, \sigma\in\bbR$. A single data point is denoted $\bx_{n} = (x_{n},y_{n})$ where $x_{n} = (1,s_{n})$ with $s_{n}\in\bbR$ being the mother IQ test score and the $1$ as an intercept and $y_{n}\in\bbR$ is the child score. The total number of training data samples is $N=434$. The reconstructed data will be parameterised with a user choosen value of $M$, weights $w\in\bbR^{M}$ and psuedo data $\bZ = \{\bz_{m}\}_{m=1}^{M}$ where $\bz_{m} = (1,r_{m})$ with $r_{m}\in\bbR$ representing the mother IQ test score and $u_{m}$ will represent the child test score. 

The likelihood is Gaussian 
\begin{align*}
	l(\theta,\bx) = \frac{1}{\sqrt{2\pi\sigma^{2}}}\exp\left(-\frac{1}{2\sigma^{2}}(\langle \beta, x\rangle - y)^{2}\right)
\end{align*}
and a flat prior is placed on $\beta$ and a Cauchy prior is placed on $\sigma$ with scale $2.5$. 

Even though the sliced version of Fisher divergence (SFD) \citep{song2019sliced} is being used, by Lemma \ref{lem:SFD_FD} we know that this is equivalent to minimising the Fisher divergence. Therefore, using the same logic as Example \ref{exp:BLR} we can work out what properties of the data we should expect to be able to recover. 

\begin{lemma}\label{lem:reco_vals}
	If the \emph{SFD} between $\pi_{\bX}$ and $\pi_{w,\bZ}$ is zero then 
	\begin{align*}
		\bX^{\top}\bX & = \sum_{m=1}^{M}w_{m}\bz_{m}\bz_{m}^{\top} \\
		\bX^{\top}\by & = \sum_{m=1}^{M}w_{m}\bz_{m}^{\top}u_{m} \\
		\sum_{n=1}^{N}y_{n}^{2} & = \sum_{m=1}^{M}w_{m}u_{m}^{2}
	\end{align*}
\end{lemma}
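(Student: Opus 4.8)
The plan is to chain together the two equivalences already available in the paper. Lemma \ref{lem:SFD_FD} says that vanishing SFD is the same as vanishing Fisher divergence, and Theorem \ref{thm:characterise} turns vanishing FD into $\emph{MMD}_k(P_\bX,P_{w,\bZ})=0$ for the kernel $k(\bx,\bx')=\int_\Theta\langle\nabla_\theta\log l(\theta,\bx),\nabla_\theta\log l(\theta,\bx')\rangle\,\mathrm{d}\pi_\bX(\theta)$. By the Hilbert-space form of the MMD in \eqref{eq:MMD_hilbert}, this last condition is exactly
\begin{align*}
	\bbE_{P_\bX}[\nabla_\theta\log l(\theta,\bx)]=\bbE_{P_{w,\bZ}}[\nabla_\theta\log l(\theta,\bx)]\quad\text{in }L^2(\Theta,\pi_\bX),
\end{align*}
i.e. these two $\bbR^{d}$-valued functions of $\theta=(\beta,\sigma)$ coincide $\pi_\bX$-almost everywhere. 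Since a flat prior is used on $\beta$ and a (half-)Cauchy on $\sigma$, the posterior $\pi_\bX$ has support all of $\bbR^{2}\times(0,\infty)$, an open set, so the two functions agree at every $(\beta,\sigma)$ with $\sigma>0$, and matching coefficients of the resulting rational identities will produce the three claimed equations.

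The computational core is to write down the feature map. From $\log l(\theta,\bx)=-\log\sigma-\tfrac{1}{2\sigma^{2}}(\langle\beta,x\rangle-y)^{2}+\text{const}$ I read off $\nabla_\beta\log l(\theta,\bx)=-\sigma^{-2}(\langle\beta,x\rangle-y)\,x$ and $\partial_\sigma\log l(\theta,\bx)=-\sigma^{-1}+\sigma^{-3}(\langle\beta,x\rangle-y)^{2}$. Summing over the true data gives $\bbE_{P_\bX}[\nabla_\beta\log l]=-\sigma^{-2}(\bX^\top\bX\,\beta-\bX^\top\by)$ and $\bbE_{P_\bX}[\partial_\sigma\log l]=-N\sigma^{-1}+\sigma^{-3}\sum_{n=1}^{N}(\langle\beta,x_n\rangle-y_n)^{2}$; over the pseudo-data the same expressions appear with $N$ replaced by $S_w:=\sum_{m=1}^{M}w_m$, with $\bX^\top\bX$ replaced by $\sum_m w_m\bz_m\bz_m^\top$, with $\bX^\top\by$ by $\sum_m w_m u_m\bz_m$, and with the final sum weighted by $w_m$.

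Equating the $\beta$-block of the two feature expectations gives $\bX^\top\bX\,\beta-\bX^\top\by=\big(\sum_m w_m\bz_m\bz_m^\top\big)\beta-\sum_m w_m u_m\bz_m$ for every $\beta$; comparing the linear-in-$\beta$ and the constant parts yields the first two identities $\bX^\top\bX=\sum_m w_m\bz_m\bz_m^\top$ and $\bX^\top\by=\sum_m w_m u_m\bz_m$. Equating the $\sigma$-block and clearing the $\sigma^{-3}$ denominator gives $-N\sigma^{2}+\sum_n(\langle\beta,x_n\rangle-y_n)^{2}=-S_w\sigma^{2}+\sum_m w_m(\langle\beta,\bz_m\rangle-u_m)^{2}$ for all $\sigma>0$ and all $\beta$; the $\sigma^{2}$ coefficient forces $N=S_w$, and the remaining $\sigma$-independent identity, once expanded in $\beta$, reproduces the two identities just found and, from its $\beta$-free term, delivers $\sum_n y_n^{2}=\sum_m w_m u_m^{2}$.

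I expect the only non-routine point to be the support/integrability step: one has to confirm that $\varphi(\bx)=\nabla_\theta\log l(\cdot,\bx)$ genuinely lies in $L^2(\Theta,\pi_\bX)$ — so Theorem \ref{thm:characterise} applies and the $L^2$ equality is meaningful — which holds because the Gaussian likelihood suppresses the $\sigma\to 0$ singularities of $\sigma^{-1},\sigma^{-3}$ and decays in $\beta$, and then that the posterior's support is full-dimensional so a $\pi_\bX$-a.e.\ identity between these rational functions of $(\beta,\sigma)$ can be upgraded to an identity of coefficients. Everything after that is bookkeeping.
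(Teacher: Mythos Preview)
Your proposal is correct and follows essentially the same route as the paper's proof: invoke the SFD$=$FD equivalence (Lemma~\ref{lem:SFD_FD}), pass to the MMD via Theorem~\ref{thm:characterise}, use full support of $\pi_{\bX}$ to upgrade the $L^{2}$ equality of feature maps to pointwise equality, and then match coefficients of the $\beta$- and $\sigma$-components of $\nabla_\theta\log l$. The only cosmetic difference is that the paper extracts $N=\sum_m w_m$ from the $(1,1)$ entry of the matrix identity $\bX^\top\bX=\sum_m w_m\bz_m\bz_m^\top$ (using that the first coordinate of each $x_n,\bz_m$ is $1$), whereas you read it off the $\sigma^{2}$ coefficient in the $\sigma$-block; both are valid and the remaining bookkeeping is identical.
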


The proof is in the Appendix. All the terms on the left hand side of these equations are the sufficient statistics for Bayesian linear regression, this shows that Theorem \ref{thm:characterise} is an alternative theoretical tool to recover such statistics. 

As the first entry of each data point is $1$ the gram matrix $\bX^{\top}\bX$ is $2\times 2$ with entries $(N,\sum_{n=1}^{N}s_{n},\sum_{n=1}^{N}s_{n},\sum_{n=1}^{N}s_{n}^{2})$, with the middle entry repeated. Recall that $s_{n}$ is the value in data point $\bx_{n}$ that is the mothers score. The entries of $\sum_{m=1}^{M}w_{m}\bz_{m}\bz_{m}^{\top}$ are $(\sum_{m=1}^{M}w_{m}, \sum_{m=1}^{M}w_{m}r_{m},\sum_{m=1}^{M}w_{m}r_{m},\sum_{m=1}^{M}w_{m}r_{m}^{2})$, where $r_{m}$ is the reconstructed mothers score. This means we would be able to reconstruct the total number of points, the empirical mean of the mothers scores and the empirical variance of the mothers scores from the gram matrix.

As the first entry of every row of $\bX$ is $1$, the expression $\bX^{\top}\by$ has $\sum_{n=1}^{N}y_{n}$ as its first entry, so we also get the total sum of the childrens scores, and combined with $\sum_{n=1}^{N}y_{n}^{2} = \sum_{m=1}^{M}w_{m}u_{m}^{2}$ and the total number of data points $N = \sum_{m=1}^{M}w_{m}$ this lets us reconstruct the empirical mean and empirical variance of the childrens scores. 

To validate these deductions, sliced score matching is performed on the objective in Definition \ref{def:Bayes_reco} with respect to $w$, $\bZ$. Varying choices of the number of pseudo data points $M=\{50, 100, 200, 400, 800, 1600\}$ are used. The last $T=1000$ samples from the reference posterior from \texttt{posteriordb} are used. The fact that $1$ is the first component of each data point is viewed as part of the model known to the adversary. The mothers IQ data is initialized as standard normal, and the child scores are initialised as $u_{m} = \langle \bar{\beta},\bz_{m}\rangle + \bar{\sigma}\varepsilon_{m}$ where $\bar{\beta},\bar{\sigma}$ are the means of the $\beta$, $\sigma$ samples the adversary has access to and $\varepsilon_{m}$ is i.i.d. standard normal. The weights $\{w_{m}\}_{m=1}^{M}$ are all initialised as one. For the slicing distribution, $L=10$ samples of standard multivariate normal are used at each iteration. Optimistion is done using Adam in Optax with learning rates for $r_{m},u_{m},w_{m}$ all set to $0.001$.\

\begin{figure}
	\includegraphics[width=\linewidth]{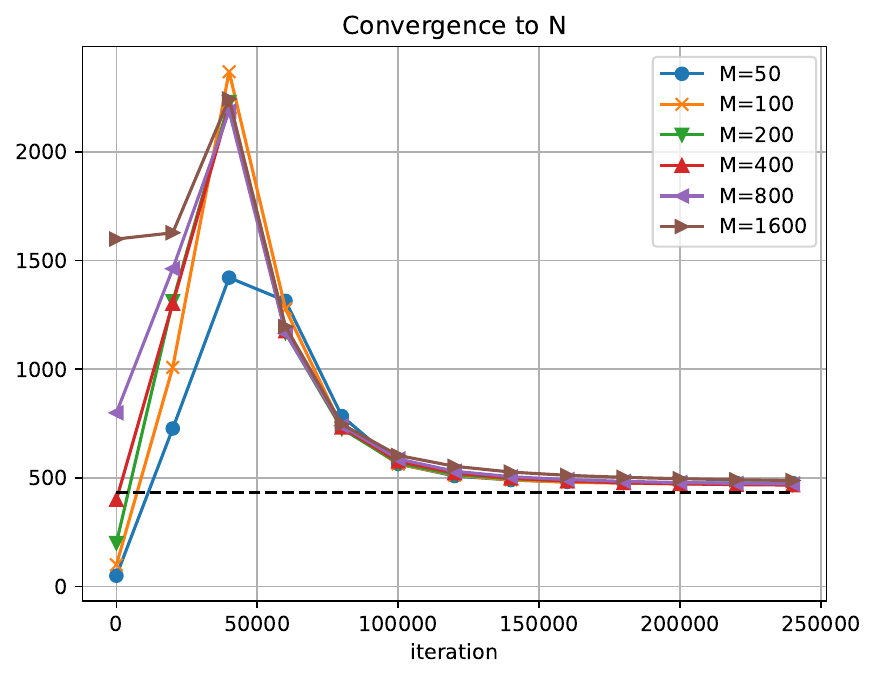}\caption{Convergence of $\sum_{m=1}^{M}w_{m}$ to $N$ where $N$ is the number of training data points.}
	\label{fig:N}
\end{figure}

\begin{figure}
	\includegraphics[width=\linewidth]{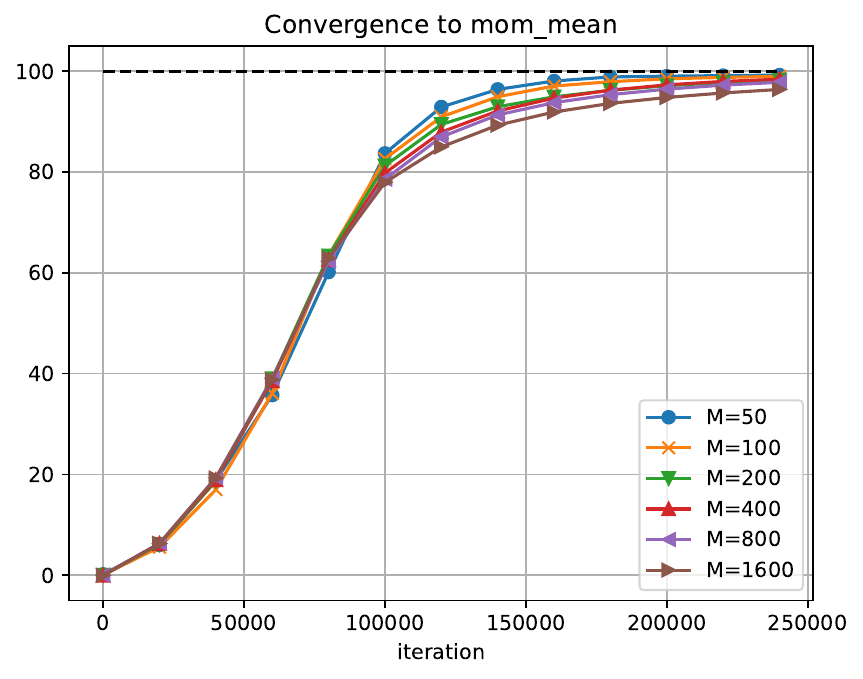}\caption{Convergence of $(\sum_{m=1}^{M}w_{m})^{-1}\sum_{m=1}^{N}w_{m}r_{m}$ to $N^{-1}\sum_{n=1}^{N}s_{n}$ where $s_{n}$ is the $n$-th mother test score.}
	\label{fig:mom_mean}
\end{figure}

\begin{figure}
	\includegraphics[width=\linewidth]{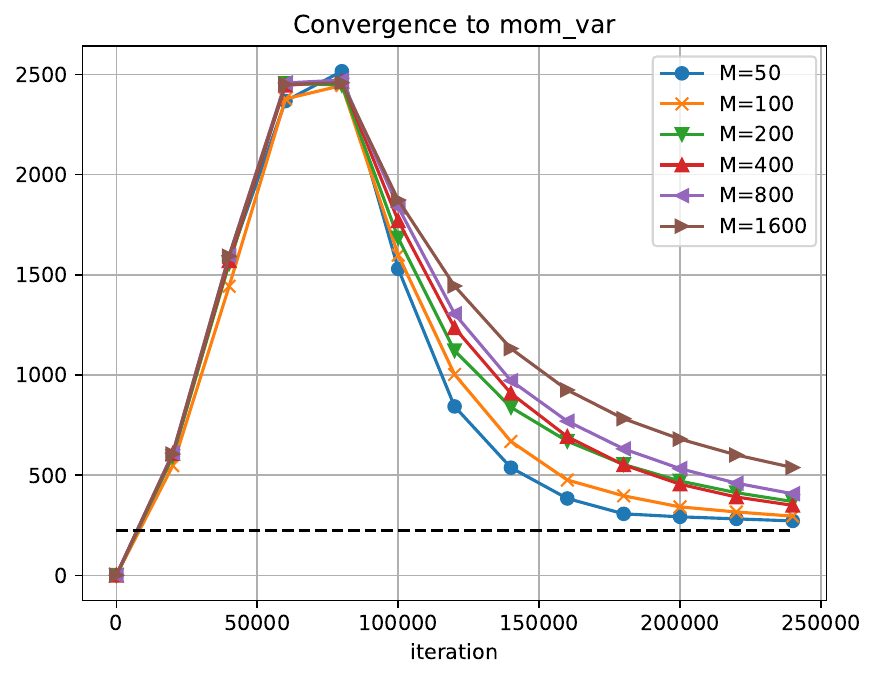}\caption{Convergence of $(\sum_{m=1}^{M}w_{m})^{-1}\sum_{m=1}^{N}w_{m}r_{m}^{2} - \left((\sum_{m=1}^{M}w_{m})^{-1}\sum_{m=1}^{M}w_{m}r_{m}\right)^{2}$ to $N^{-1}\sum_{n=1}^{N}s_{n}^{2} - \left( N^{-1}\sum_{n=1}^{N}s_{n}\right)^{2}$ where $s_{n}$ is the $n$-th mother test score.}
	\label{fig:mom_var}
\end{figure}

\begin{figure}
	\includegraphics[width=\linewidth]{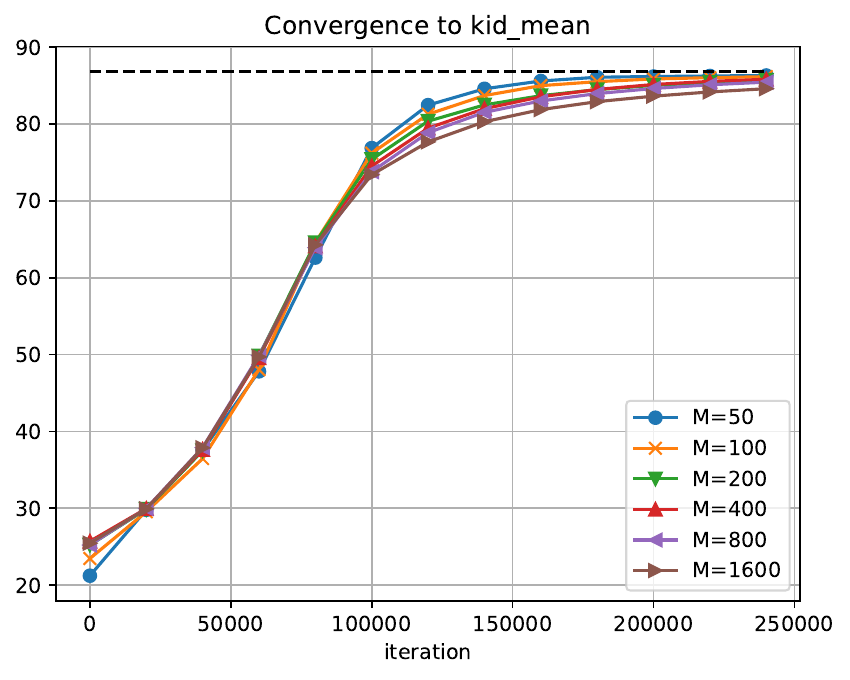}\caption{Convergence of $(\sum_{m=1}^{M}w_{m})^{-1}\sum_{m=1}^{N}w_{m}u_{m}$ to $N^{-1}\sum_{n=1}^{N}y_{n}$ where $y_{n}$ is the $n$-th kid test score.}
	\label{fig:kid_mean}
\end{figure}

\begin{figure}
	\includegraphics[width=\linewidth]{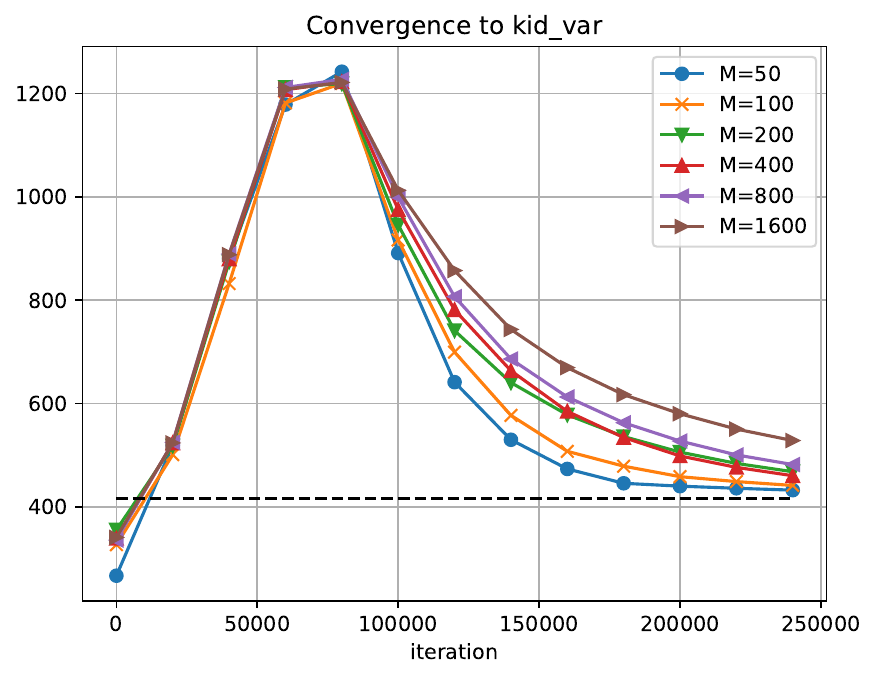}\caption{Convergence of $(\sum_{m=1}^{M}w_{m})^{-1}\sum_{m=1}^{N}w_{m}u_{m}^{2} - \left((\sum_{m=1}^{M}w_{m})^{-1}\sum_{m=1}^{M}w_{m}u_{m}\right)^{2}$ to $N^{-1}\sum_{n=1}^{N}y_{n}^{2} - \left( N^{-1}\sum_{n=1}^{N}y_{n}\right)^{2}$ where $y_{n}$ is the $n$-th kid test score.}
	\label{fig:kid_var}
\end{figure}

The figures below show the convergence of the reconstructed weights and data to the statistics of the target model that were shown to be vulnerable by Lemma \ref{lem:reco_vals}. Figure \ref{fig:N} shows convergence of the sum of the weights to the total number of training data points. Figure \ref{fig:mom_mean} and Figure \ref{fig:mom_var} show (respectively) the convergence of the empirical, weighted mean (respectively variance) of the reconstructed mom scores to the empirical mean (respectively variance) of the true, unknown mom scores. Figure \ref{fig:kid_mean} and Figure \ref{fig:kid_var} show (respectively) the convergence of the empirical, weighted mean (respectively variance) of the reconstructed child scores to the empirical mean (respectively variance) of the true, unknown child scores.

This shows that even though full, exact training data reconstruction is not possible, the theoretical chracterisation and numerical objective presented in previous sections provides an adversary the ability to gain other information about the unknown training data. In this case an understanding of number of children used in the model, the mean and variance of their scores as well as the mean and variance of the scores of their mothers scores. This could then be used by an adversary to bootstrap other inferences about the data, such as the ages of the children or mothers.

\section{Conclusion and Future Directions}\label{sec:conclusion}
This paper has provided a concrete mathematical framework to analyze the data reconstruction problem by expressing the problem purely in terms of empirical measures of the training data. For the first time in the literature the Bayesian case has been covered and the non-Bayesian setting follows as a simple, natural consequence which recovers existing methods in the literature. This shows that data reconstruction attacks for both the Bayesian and non-Bayesian setting can be viewed as score-based problems. 

The reconstruction method and characterisation result begs many further questions. A full investigation into the numeric properties of the reconstruction methods using more advanced score-matching methods, such as de-noising methods, would be valuable. Using the MMD representation of the possible features that can be reconstructed also has great potential. This could be used to quantitatively evaluate how susceptible a model is to reconstruction attacks by evaluating how characteristic the corresponding kernel based on the model features is. Another question along this line is understanding quantitatively the trade off between model complexity making reconstruction easier and increasing the number of training points making reconstruction harder. 

A topic not studied in this paper is differential privacy. Current guarantees of differential privacy to stop reconstruction attack focus on very different assumptions than those used in this paper, for example the concept of Reconstruction Robustness, often abbreviated to ReRo, defined by \citet{Balle2022_informed} has become a popular item of study to combat reconstruction attacks and assumes the attacker has all but one of the data points. As has been shown in the numerics section, sufficient statistics of models can be reconstructed. Therefore, focus on the sufficient statistic pertubation methods would be natural \citep{Bernstein2019,Alabi2022}. 

Finally, the optimisation methods used in Section \ref{sec:numerics} are somewhat basic, simply gradient descent on the objective. Further analysis of the optimisation, using the representation of the objective as a MMD, is needed. 
\bibliography{example_paper}

\begin{thebibliography}{34}
\providecommand{\natexlab}[1]{#1}
\providecommand{\url}[1]{\texttt{#1}}
\expandafter\ifx\csname urlstyle\endcsname\relax
  \providecommand{\doi}[1]{doi: #1}\else
  \providecommand{\doi}{doi: \begingroup \urlstyle{rm}\Url}\fi

\bibitem[Alabi et~al.(2022)Alabi, McMillan, Sarathy, Smith, and
  Vadhan]{Alabi2022}
Alabi, D., McMillan, A., Sarathy, J., Smith, A., and Vadhan, S.
\newblock Differentially private simple linear regression.
\newblock \emph{Proceedings on Privacy Enhancing Technologies}, 2022\penalty0
  (2):\penalty0 184–204, 2022.

\bibitem[Balle et~al.(2022)Balle, Cherubin, and Hayes]{Balle2022_informed}
Balle, B., Cherubin, G., and Hayes, J.
\newblock Reconstructing training data with informed adversaries.
\newblock In \emph{2022 IEEE Symposium on Security and Privacy (SP)}. IEEE,
  2022.

\bibitem[Bernardo \& Smith(1994)Bernardo and Smith]{Bernardo1994}
Bernardo, J.~M. and Smith, A. F.~M.
\newblock \emph{Bayesian Theory}.
\newblock Wiley, May 1994.
\newblock ISBN 9780470316870.

\bibitem[Bernstein \& Sheldon(2019)Bernstein and Sheldon]{Bernstein2019}
Bernstein, G. and Sheldon, D.~R.
\newblock Differentially private bayesian linear regression.
\newblock In \emph{Advances in Neural Information Processing Systems},
  volume~32, 2019.

\bibitem[Blackwell \& Ramamoorthi(1982)Blackwell and
  Ramamoorthi]{Blackwell1982}
Blackwell, D. and Ramamoorthi, R.~V.
\newblock A bayes but not classically sufficient statistic.
\newblock \emph{The Annals of Statistics}, 10\penalty0 (3), 1982.

\bibitem[Buzaglo et~al.(2023)Buzaglo, Haim, Yehudai, Vardi, Oz, Nikankin, and
  Irani]{buzaglo2023deconstructing}
Buzaglo, G., Haim, N., Yehudai, G., Vardi, G., Oz, Y., Nikankin, Y., and Irani,
  M.
\newblock Deconstructing data reconstruction: Multiclass, weight decay and
  general losses.
\newblock \emph{arxiv:2307.01827}, 2023.

\bibitem[Casella \& Berger(2024)Casella and Berger]{Casella2024}
Casella, G. and Berger, R.
\newblock \emph{Statistical Inference}.
\newblock Chapman and Hall/CRC, April 2024.

\bibitem[Christmann \& Steinwart(2008)Christmann and Steinwart]{Steinwart2008}
Christmann, A. and Steinwart, I.
\newblock \emph{Support Vector Machines}.
\newblock Springer New York, 2008.

\bibitem[Dwork et~al.(2017)Dwork, Smith, Steinke, and Ullman]{Dwork2017}
Dwork, C., Smith, A., Steinke, T., and Ullman, J.
\newblock Exposed! a survey of attacks on private data.
\newblock \emph{Annual Review of Statistics and Its Application}, 4\penalty0
  (1):\penalty0 61–84, 2017.

\bibitem[Gelman \& Hill(2006)Gelman and Hill]{Gelman_Hill_2006}
Gelman, A. and Hill, J.
\newblock \emph{Linear regression: the basics}, pp.\  31–52.
\newblock Analytical Methods for Social Research. Cambridge University Press,
  2006.

\bibitem[Gong \& Liu(2018)Gong and Liu]{attribute_inference}
Gong, N.~Z. and Liu, B.
\newblock Attribute inference attacks in online social networks.
\newblock \emph{ACM Trans. Priv. Secur.}, 21\penalty0 (1), 2018.

\bibitem[Gretton et~al.(2012)Gretton, Borgwardt, Rasch, Sch{{\"o}}lkopf, and
  Smola]{gretton2012}
Gretton, A., Borgwardt, K.~M., Rasch, M.~J., Sch{{\"o}}lkopf, B., and Smola, A.
\newblock A kernel two-sample test.
\newblock \emph{Journal of Machine Learning Research}, 13\penalty0
  (25):\penalty0 723--773, 2012.

\bibitem[Guo et~al.(2022)Guo, Karrer, Chaudhuri, and van~der
  Maaten]{Guo2022_bounding}
Guo, C., Karrer, B., Chaudhuri, K., and van~der Maaten, L.
\newblock Bounding training data reconstruction in private (deep) learning.
\newblock In Chaudhuri, K., Jegelka, S., Song, L., Szepesvari, C., Niu, G., and
  Sabato, S. (eds.), \emph{Proceedings of the 39th International Conference on
  Machine Learning}, volume 162 of \emph{Proceedings of Machine Learning
  Research}, pp.\  8056--8071. PMLR, 17--23 Jul 2022.

\bibitem[Haim et~al.(2022)Haim, Vardi, Yehudai, Shamir, and Irani]{Haim2022}
Haim, N., Vardi, G., Yehudai, G., Shamir, O., and Irani, M.
\newblock Reconstructing training data from trained neural networks.
\newblock In Koyejo, S., Mohamed, S., Agarwal, A., Belgrave, D., Cho, K., and
  Oh, A. (eds.), \emph{Advances in Neural Information Processing Systems},
  volume~35, pp.\  22911--22924. Curran Associates, Inc., 2022.

\bibitem[Hayes et~al.(2023)Hayes, Mahloujifar, and Balle]{Hayes203_dpsgd}
Hayes, J., Mahloujifar, S., and Balle, B.
\newblock Bounding training data reconstruction in dp-sgd.
\newblock \emph{arXiv:2302.07225}, 2023.

\bibitem[He et~al.(2016)He, Zhang, Ren, and Sun]{He_2016_CVPR}
He, K., Zhang, X., Ren, S., and Sun, J.
\newblock Deep residual learning for image recognition.
\newblock In \emph{Proceedings of the IEEE Conference on Computer Vision and
  Pattern Recognition (CVPR)}, June 2016.

\bibitem[Hyv{{\"a}}rinen(2005)]{hyvarinen2005}
Hyv{{\"a}}rinen, A.
\newblock Estimation of non-normalized statistical models by score matching.
\newblock \emph{Journal of Machine Learning Research}, 6\penalty0
  (24):\penalty0 695--709, 2005.

\bibitem[Kaissis et~al.(2023)Kaissis, Hayes, Ziller, and
  Rueckert]{Kaissis2023_hypothesis}
Kaissis, G., Hayes, J., Ziller, A., and Rueckert, D.
\newblock Bounding data reconstruction attacks with the hypothesis testing
  interpretation of differential privacy.
\newblock \emph{arXiv:2307.03928}, 2023.

\bibitem[Loo et~al.(2023)Loo, Hasani, Lechner, Amini, and Rus]{Loo2023}
Loo, N., Hasani, R., Lechner, M., Amini, A., and Rus, D.
\newblock Understanding reconstruction attacks with the neural tangent kernel
  and dataset distillation.
\newblock \emph{arXiv:2302.01428}, 2023.

\bibitem[Magnusson et~al.(2023)Magnusson, Bürkner, and Vehtari]{posteriordb}
Magnusson, M., Bürkner, P., and Vehtari, A.
\newblock {posteriordb: a set of posteriors for Bayesian inference and
  probabilistic programming}, October 2023.

\bibitem[Manousakas et~al.(2020)Manousakas, Xu, Mascolo, and
  Campbell]{Manousakas2020}
Manousakas, D., Xu, Z., Mascolo, C., and Campbell, T.
\newblock Bayesian pseudocoresets.
\newblock In \emph{Advances in Neural Information Processing Systems},
  volume~33, pp.\  14950--14960, 2020.

\bibitem[Muandet et~al.(2017)Muandet, Fukumizu, Sriperumbudur, and
  Schölkopf]{muandet2017}
Muandet, K., Fukumizu, K., Sriperumbudur, B., and Schölkopf, B.
\newblock Kernel mean embedding of distributions: A review and beyond.
\newblock \emph{Foundations and Trends® in Machine Learning}, 10\penalty0
  (1-2):\penalty0 1--141, 2017.
\newblock ISSN 1935-8237.

\bibitem[Ponomareva et~al.(2023)Ponomareva, Vassilvitskii, Xu, McMahan,
  Kurakin, and Zhang]{Ponomareva2023_DPfy}
Ponomareva, N., Vassilvitskii, S., Xu, Z., McMahan, B., Kurakin, A., and Zhang,
  C.
\newblock How to dp-fy ml: A practical tutorial to machine learning with
  differential privacy.
\newblock In \emph{Proceedings of the 29th ACM SIGKDD Conference on Knowledge
  Discovery and Data Mining}, KDD '23, pp.\  5823–5824, New York, NY, USA,
  2023. Association for Computing Machinery.

\bibitem[Rombach et~al.(2022)Rombach, Blattmann, Lorenz, Esser, and
  Ommer]{Rombach_2022_CVPR}
Rombach, R., Blattmann, A., Lorenz, D., Esser, P., and Ommer, B.
\newblock High-resolution image synthesis with latent diffusion models.
\newblock In \emph{Proceedings of the IEEE/CVF Conference on Computer Vision
  and Pattern Recognition (CVPR)}, pp.\  10684--10695, June 2022.

\bibitem[Runkel et~al.(2024)Runkel, Gandikota, Geiping, Schönlieb, and
  Moeller]{Runkel2024}
Runkel, C., Gandikota, K.~V., Geiping, J., Schönlieb, C.-B., and Moeller, M.
\newblock Training data reconstruction: Privacy due to uncertainty?, 2024.

\bibitem[Sachdeva \& McAuley(2023)Sachdeva and
  McAuley]{sachdeva_2023_distillation}
Sachdeva, N. and McAuley, J.
\newblock Data distillation: A survey.
\newblock \emph{Transactions on Machine Learning Research}, 2023.

\bibitem[Shokri et~al.(2017)Shokri, Stronati, Song, and
  Shmatikov]{membership_inference}
Shokri, R., Stronati, M., Song, C., and Shmatikov, V.
\newblock Membership inference attacks against machine learning models.
\newblock In \emph{2017 IEEE Symposium on Security and Privacy (SP)}, pp.\
  3--18, 2017.

\bibitem[Song \& Ermon(2020)Song and Ermon]{Song2020_how_to_train}
Song, Y. and Ermon, S.
\newblock Improved techniques for training score-based generative models.
\newblock In Larochelle, H., Ranzato, M., Hadsell, R., Balcan, M., and Lin, H.
  (eds.), \emph{Advances in Neural Information Processing Systems}, volume~33,
  pp.\  12438--12448. Curran Associates, Inc., 2020.

\bibitem[Song et~al.(2019)Song, Garg, Shi, and Ermon]{song2019sliced}
Song, Y., Garg, S., Shi, J., and Ermon, S.
\newblock Sliced score matching: {A} scalable approach to density and score
  estimation.
\newblock In \emph{Proceedings of the Thirty-Fifth Conference on Uncertainty in
  Artificial Intelligence, {UAI} 2019, Tel Aviv, Israel, July 22-25, 2019},
  pp.\  204, 2019.

\bibitem[Sriperumbudur et~al.(2011)Sriperumbudur, Fukumizu, and
  Lanckriet]{Sriperumbudur2011}
Sriperumbudur, B.~K., Fukumizu, K., and Lanckriet, G.~R.
\newblock Universality, characteristic kernels and rkhs embedding of measures.
\newblock \emph{Journal of Machine Learning Research}, 12\penalty0
  (70):\penalty0 2389--2410, 2011.

\bibitem[Steinwart et~al.(2006)Steinwart, Hush, and Scovel]{Steinwart2006}
Steinwart, I., Hush, D., and Scovel, C.
\newblock An explicit description of the reproducing kernel hilbert spaces of
  gaussian rbf kernels.
\newblock \emph{IEEE Transactions on Information Theory}, 52\penalty0
  (10):\penalty0 4635--4643, 2006.

\bibitem[Winter et~al.(2023)Winter, Campbell, Lin, Srivastava, and
  Dunson]{winter2023_future}
Winter, S., Campbell, T., Lin, L., Srivastava, S., and Dunson, D.~B.
\newblock Machine learning and the future of bayesian computation.
\newblock \emph{arXiv:2304.11251}, 2023.

\bibitem[Wynne(2023)]{Wynne2023}
Wynne, G.
\newblock Bayes hilbert spaces for posterior approximation.
\newblock 2023.

\bibitem[Zhu et~al.(2019)Zhu, Liu, and Han]{Zhu2019_leakage}
Zhu, L., Liu, Z., and Han, S.
\newblock Deep leakage from gradients.
\newblock In Wallach, H., Larochelle, H., Beygelzimer, A., d\textquotesingle
  Alch\'{e}-Buc, F., Fox, E., and Garnett, R. (eds.), \emph{Advances in Neural
  Information Processing Systems}, volume~32. Curran Associates, Inc., 2019.

\end{thebibliography}
\bibliographystyle{icml2022}

\newpage
\appendix
\onecolumn
\section{Sliced Fisher Divergence}
This section shall cover the simple adaptation from Fisher divergence to sliced Fisher divergence (SFD) \citep{song2019sliced} for the results in Section \ref{sec:Bayesian}. Sliced Fisher divergence has the following form for a standard normal slicing distribution $p_{\mathcal{N}}$ over $\bbR^{d}$ where $\theta\in\bbR^{d}$
\begin{align}\label{eq:SFD}
	\text{SFD}(\pi_{\bX},\pi_{w,Z}) = \frac{1}{2}\int_{\bbR^{d}}\int_{\Theta}\langle v,\nabla_{\theta}\log\pi_{\bX}(\theta)\rangle_{\bbR^{d}} - \langle v,\nabla_{\theta}\log\pi_{w,\bZ}(\theta)\rangle_{\bbR^{d}}^{2}\mathrm{d}\pi_{\bX}(\theta)\mathrm{d}p_{\mathcal{N}}(v).
\end{align}

The idea of sliced fisher divergence is that it retains the desirable properties of the Fisher divergence while being more computationally viable. This is due to the following integration-by-parts rearrangement, an analogous result to Equation \eqref{eq:FD_IBP}, being $O(1)$ rather than $O(d)$ to estimate,
\begin{align}\label{eq:SFD_IBP}
	\text{SFD}(\pi_{\bX},\pi_{w,\bZ}) = \bbE_{p\times\pi_{\bX}}[\langle v,\nabla_{\theta}^{2}\log\pi_{w,\bZ}(\theta)v\rangle_{\bbR^{d}}] + \frac{1}{2}\bbE_{\pi_{\bX}}[\lVert \nabla_{\theta}\log\pi_{w,\bZ}(\theta)\rVert_{\bbR^{d}}^{2}] + C',
\end{align}
where $C'$ is a constant that doesn't depend on $w,\bZ$.

Given samples $\{\theta_{t}\}_{t=1}^{T}$ from $\pi_{\bX}$ and $\{v_{tl}\}_{t=1,l=1}^{T,L}$ for some user chosen $L$ the SFD in Equation \eqref{eq:SFD_IBP} can be estimated as
\begin{align}\label{eq:SFD_est}
	\text{SFD}(\pi_{\bX},\pi_{w,\bZ}) \approx \frac{1}{TL}\sum_{t=1}^{T}\sum_{l=1}^{L}\langle v_{tl},\nabla_{\theta}^{2}\log\pi_{w,\bZ}(\theta_{t})v_{tl}\rangle_{\bbR^{d}} + \frac{1}{2T}\sum_{t=1}^{T}\lVert\nabla_{\theta}\log\pi_{w,\bZ}(\theta_{t})\rVert^{2},
\end{align}
for the derivation see \citet{song2019sliced}. Choices other than standard normal for the slicing distribution are available.

Armed with the definition of SFD a result analogous to Theorem \ref{thm:characterise} can be easily derived.

\begin{proposition}\label{prop:SFD_MMD}
	Let $\bX = \{\bx_{n}\}_{n=1}^{N}, \bZ = \{\bz_{m}\}_{m=1}^{M}$ be two data sets and $w = \{w_{m}\}_{m=1}^{M}$ a set of scalar weights. Let $\pi_{0}$ be a prior for an unknown parameter lying in $\Theta \subset \bbR^{d}$, $l$ be a likelihood function and $p_{\mathcal{N}}$ the standard normal on $\bbR^{d}$. For  ${\pi_{\bX}(\theta)\propto\prod_{n=1}^{N}l(\theta,\bx_{n})\cdot\pi_{0}(\theta)}$ and ${\pi_{w,\bZ}(\theta)\propto\prod_{m=1}^{M}l(\theta,\bz_{m})^{w_{m}}\cdot\pi_{0}(\theta)}$  
	\begin{align*}
		\emph{SFD}(\pi_{\bX},\pi_{w,\bZ}) = \frac{1}{2}\:\emph{MMD}_{k}(P_{\bX}, P_{w,\bZ})^{2} 
	\end{align*}
	where 
	\begin{align*}
		k(\bx,\bx') = \int_{\bbR^{d}}\int_{\Theta}\langle v,\nabla_{\theta}\log l(\theta,\bx)\rangle_{\bbR^{d}}\langle v,\nabla_{\theta}\log l(\theta,\bx')\rangle_{\bbR^{d}} \mathrm{d}\pi_{\bX}(\theta)\mathrm{d}p_{\mathcal{N}}(v)
	\end{align*}
	and $P_{\bX} = \sum_{n=1}^{N}\delta_{\bx_{n}}, P_{w,\bZ} = \sum_{m=1}^{M}w_{m}\delta_{\bz_{m}}$ are the un-normalised empirical data measures.
\end{proposition}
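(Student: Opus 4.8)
The plan is to mirror the proof of Theorem \ref{thm:characterise} almost verbatim, the only new ingredient being the extra integration against the slicing distribution $p_{\mathcal{N}}$. First I would observe, exactly as in \eqref{eq:expand}, that the difference of score functions decomposes as
\begin{align*}
\nabla_{\theta}\log\pi_{\bX}(\theta) - \nabla_{\theta}\log\pi_{w,\bZ}(\theta) = \bbE_{P_{\bX}}[\nabla_{\theta}\log l(\theta,\bx)] - \bbE_{P_{w,\bZ}}[\nabla_{\theta}\log l(\theta,\bz)],
\end{align*}
since the prior term cancels and the (weighted) product likelihood has (weighted) sum log-likelihood. Pairing with a slicing direction $v$ and using linearity of the inner product, the integrand inside \eqref{eq:SFD} becomes
\begin{align*}
\langle v,\nabla_{\theta}\log\pi_{\bX}(\theta)\rangle_{\bbR^{d}} - \langle v,\nabla_{\theta}\log\pi_{w,\bZ}(\theta)\rangle_{\bbR^{d}} = \bbE_{P_{\bX}}[\langle v,\nabla_{\theta}\log l(\theta,\bx)\rangle_{\bbR^{d}}] - \bbE_{P_{w,\bZ}}[\langle v,\nabla_{\theta}\log l(\theta,\bz)\rangle_{\bbR^{d}}].
\end{align*}

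Next I would square this difference and expand it into the three bilinear terms, in direct analogy with \eqref{eq:expansion_2}: a $P_{\bX}\times P_{\bX}$ term, a $-2\,P_{\bX}\times P_{w,\bZ}$ cross term, and a $P_{w,\bZ}\times P_{w,\bZ}$ term, each carrying the factor $\langle v,\nabla_{\theta}\log l(\theta,\bx)\rangle_{\bbR^{d}}\langle v,\nabla_{\theta}\log l(\theta,\bx')\rangle_{\bbR^{d}}$. Then I would reintroduce the double integral $\tfrac12\int_{\bbR^{d}}\int_{\Theta}(\cdot)\,\mathrm{d}\pi_{\bX}(\theta)\,\mathrm{d}p_{\mathcal{N}}(v)$ from the definition of SFD and push it through the finite empirical expectations, so that by the definition of $k$ each bilinear term becomes $\bbE[k(\bx,\bx')]$ under the appropriate product of empirical measures. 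The three terms then assemble into $\bbE_{P_{\bX}\times P_{\bX}}[k] + \bbE_{P_{w,\bZ}\times P_{w,\bZ}}[k] - 2\bbE_{P_{\bX}\times P_{w,\bZ}}[k]$, which is $\mathrm{MMD}_{k}(P_{\bX},P_{w,\bZ})^{2}$ by \eqref{eq:MMD_expansion}, leaving the stated $\tfrac12\,\mathrm{MMD}_{k}^{2}$.

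The only points needing a line of justification beyond rote rearrangement are: (i) that the stated $k$ is a genuine positive semi-definite kernel — this is immediate since $k(\bx,\bx') = \langle \varphi(\bx),\varphi(\bx')\rangle$ for the feature map $\varphi(\bx)\colon (\theta,v)\mapsto \langle v,\nabla_{\theta}\log l(\theta,\bx)\rangle_{\bbR^{d}}$ in the Hilbert space $L^{2}(\Theta\times\bbR^{d},\pi_{\bX}\otimes p_{\mathcal{N}})$, so \eqref{eq:MMD_hilbert} applies; and (ii) the interchange of the $\pi_{\bX}\otimes p_{\mathcal{N}}$ integral with the empirical sums, which is trivial because $P_{\bX}$ and $P_{w,\bZ}$ are finite combinations of Dirac masses (more generally it needs only the mild score-regularity already assumed for \eqref{eq:FD_IBP}). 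I do not anticipate a real obstacle: the result is a corollary of the same computation as Theorem \ref{thm:characterise}, with $\nabla_{\theta}\log l(\theta,\bx)$ replaced by its one-dimensional projection $\langle v,\nabla_{\theta}\log l(\theta,\bx)\rangle_{\bbR^{d}}$ and an added average over $v\sim p_{\mathcal{N}}$; the only subtlety is bookkeeping — keeping the slicing integral inside, rather than outside, the kernel.
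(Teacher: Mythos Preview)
Your proposal is correct and follows exactly the approach the paper indicates: it says only that ``the proof is a simple adaptation of the proof of Theorem \ref{thm:characterise} by doing a term by term comparison of Equation \eqref{eq:SFD} and the definition of MMD,'' and your write-up carries out precisely that adaptation, with the added slicing integral handled in the obvious way. Your extra remarks on the feature space $L^{2}(\Theta\times\bbR^{d},\pi_{\bX}\otimes p_{\mathcal{N}})$ and on interchanging the integral with the finite empirical sums are more detail than the paper itself provides, but entirely in the same spirit.
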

The proof is a simple adaptation of the proof of Theorem \ref{thm:characterise} by doing a term by term comparison of Equation \ref{eq:SFD} and the defintion of MMD.

In fact, the choice of $p_{\mathcal{N}}$ being standard multivariate normal for the slicing distribution allows for an exact equivalance to FD. 

\begin{lemma}\label{lem:SFD_FD}
	Given the assumptions of Proposition \ref{prop:SFD_MMD} $\emph{SFD}(\pi_{\bX},\pi_{w,\bZ}) = \emph{FD}(\pi_{X},\pi_{w,\bZ})$ and the kernels for their corresponding MMD expressions, from Theorem \ref{thm:characterise} and Proposition \ref{prop:SFD_MMD}, respectively, are equal.
\end{lemma}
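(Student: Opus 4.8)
The plan is to reduce everything to the elementary second-moment identity that for a standard multivariate normal $v\sim p_{\mathcal{N}}$ on $\bbR^{d}$ and any fixed $a,b\in\bbR^{d}$,
\begin{align*}
	\int_{\bbR^{d}}\langle v,a\rangle_{\bbR^{d}}\langle v,b\rangle_{\bbR^{d}}\,\mathrm{d}p_{\mathcal{N}}(v) = a^{\top}\bbE[vv^{\top}]b = \langle a,b\rangle_{\bbR^{d}},
\end{align*}
which holds precisely because $p_{\mathcal{N}}$ has identity covariance. From this the whole statement falls out: first I would establish that the two kernels coincide, and then deduce $\mathrm{SFD}=\mathrm{FD}$ by combining Proposition \ref{prop:SFD_MMD} and Theorem \ref{thm:characterise}.

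To prove the kernel identity, write $k_{\mathrm{SFD}}$ for the kernel of Proposition \ref{prop:SFD_MMD} and $k_{\mathrm{FD}}$ for that of Theorem \ref{thm:characterise}. In the definition of $k_{\mathrm{SFD}}(\bx,\bx')$ I would exchange the $v$-integral and the $\theta$-integral (Fubini, justified by the same mild integrability of the score functions that underpins the integration-by-parts formulas \eqref{eq:FD_IBP} and \eqref{eq:SFD_IBP}, using $|\langle v,a\rangle\langle v,b\rangle|\le\tfrac{1}{2}(\langle v,a\rangle^{2}+\langle v,b\rangle^{2})$ as a dominating function), and then carry out the inner $v$-integral with $a=\nabla_{\theta}\log l(\theta,\bx)$ and $b=\nabla_{\theta}\log l(\theta,\bx')$ via the identity above. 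This collapses the double integral to $\int_{\Theta}\langle\nabla_{\theta}\log l(\theta,\bx),\nabla_{\theta}\log l(\theta,\bx')\rangle_{\bbR^{d}}\,\mathrm{d}\pi_{\bX}(\theta)$, which is exactly $k_{\mathrm{FD}}(\bx,\bx')$.

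With the kernel identity in hand, $\mathrm{SFD}=\mathrm{FD}$ is immediate: Proposition \ref{prop:SFD_MMD} gives $\mathrm{SFD}(\pi_{\bX},\pi_{w,\bZ})=\tfrac{1}{2}\mathrm{MMD}_{k_{\mathrm{SFD}}}(P_{\bX},P_{w,\bZ})^{2}$, Theorem \ref{thm:characterise} gives the analogous formula with $k_{\mathrm{FD}}$, and equality of the kernels forces equality of the two divergences. Alternatively one can prove $\mathrm{SFD}=\mathrm{FD}$ directly from \eqref{eq:SFD} by setting $\Delta(\theta)=\nabla_{\theta}\log\pi_{\bX}(\theta)-\nabla_{\theta}\log\pi_{w,\bZ}(\theta)$, observing that the SFD integrand equals $\langle v,\Delta(\theta)\rangle_{\bbR^{d}}^{2}$, integrating out $v$ to obtain $\lVert\Delta(\theta)\rVert_{\bbR^{d}}^{2}$, and recognising \eqref{eq:FD}; here Tonelli applies with no extra work since the integrand is nonnegative. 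There is no real obstacle in this lemma; the only points deserving a word of care are the interchange of integration order (covered by the regularity already assumed for the sliced and unsliced integration-by-parts identities) and the observation that equality of the two MMD \emph{values} on the single pair $(P_{\bX},P_{w,\bZ})$ would not by itself imply equality of the kernels, which is why the kernel identity is proved on its own via the Gaussian second-moment computation rather than inferred from $\mathrm{SFD}=\mathrm{FD}$.
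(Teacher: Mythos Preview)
Your proof is correct and rests on the same Gaussian second-moment identity $\bbE_{p_{\mathcal{N}}}[\langle v,a\rangle\langle v,b\rangle]=\langle a,b\rangle$ that the paper uses. The only cosmetic difference is ordering: the paper proves $\mathrm{SFD}=\mathrm{FD}$ and the kernel equality as two independent applications of this identity, whereas you first establish the kernel equality and then deduce $\mathrm{SFD}=\mathrm{FD}$ via the MMD representations (and also sketch the direct route, which matches the paper exactly).
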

\begin{proof}
	This result is a simple consequence of the fact that if $v$ is a standard multivariate Gaussian in $\bbR^{d}$ and $u\in\bbR^{d}$ then $\langle v,u\rangle_{\bbR^{d}}\sim N(0,\lVert u\rVert_{\bbR^{d}}^{2})$. Substituting $u = \nabla_{\theta}\log\pi_{\bX}(\theta) - \nabla_{\theta}\log\pi_{w,\bZ}(\theta)$ then using Equation \eqref{eq:FD}, Equation \eqref{eq:SFD} and re-arranging integrals compeletes the result for equivalance of SFD and FD. For the equivalance of the kernels use the fact that if $v$ is a standard multivariate Gaussian in $\bbR^{d}$ and $u,w\in\bbR^{d}$ then $\bbE[\langle v,u\rangle_{\bbR^{d}}\langle v,w\rangle_{\bbR^{d}}] = \langle u,w\rangle_{\bbR{^d}}$, substitute $u = \nabla_{\theta}\log l(\theta,\bx), w = \nabla_{\theta}\log l(\theta,\bx')$ and use the definitions of the kernels in Theorem \ref{thm:characterise} and Proposition \ref{prop:SFD_MMD} to complete the proof. 
\end{proof}

When using SFD to perform DRA, at each iteration the user will draw $\{v_{tl}\}_{t=1,l=1}^{T,L}$ from $p_{\mathcal{N}}$ and use the estimate in Equation \eqref{eq:SFD_est} and then use auto-diff to take a gradient step with respect to $w,\bZ$. 

\section{Proof of Lemma \ref{lem:reco_vals}}
The proof sttrategy is very similar to what is done in Example \ref{exp:BLR}. First, the feature maps of the kernel are written out. Then, using the equivalence between MMD and FD, one can deduce that if the FD is zero then the difference between the features maps in the feature space is zero. As the feature space is $L^{2}(\pi_{\bX})$ this means that the feature maps must be equal as functions of $\theta$ whereever $\pi_{\bX}$ has support. In the numerical example used in Section \ref{sec:numerics} $\pi_{\bX}$ has full support. Therefore, we can use particular values of $\theta$ to identify what values the weights and psudeo-data must take. 

Since $\theta = (\beta,\sigma)$, the feature map at one point is $\varphi(\bx)$ which is the function $\varphi(\bx)(\theta) = \nabla_{\theta}\log l(\theta,\bx) = (\nabla_{\beta}\log l(\theta,\bx),\nabla_{\sigma}\log l(\theta,\bx))\eqqcolon (\varphi_{\beta}(\bx)(\theta),\varphi_{\sigma}(\bx)(\theta))$. So in particular, each of these two entries must be equal as functions of $\theta$ under the expectations of $P_{\bX}$ and $P_{w,\bZ}$. For the first term, this means the following two expressions are equal as functions of $\beta,\sigma$
\begin{align*}
	\bbE_{P_{\bX}}[\varphi_{\beta}(\bx)(\theta)] & = \bbE_{P_{\bX}}\left[-\frac{1}{\sigma^{2}}x(\langle \beta, x\rangle - y)\right] = -\frac{1}{2\sigma^{2}}\sum_{n=1}^{N}x_{n}x_{n}^{\top}\beta - x_{n}y_{n}, \\
	\bbE_{P_{w,\bZ}}[\varphi_{\beta}(\bx)(\theta)] & = \bbE_{P_{w,\bZ}}\left[-\frac{1}{\sigma^{2}}x(\langle \beta, x\rangle - y)\right] = -\frac{1}{2\sigma^{2}}\sum_{m=1}^{M}w_{m}z_{m}z_{m}^{\top}\beta - z_{m}u_{m}. \\
\end{align*}
From this we can conclude the first two equivalances in Lemma \ref{lem:reco_vals}. For the third equivalence, we compare the features in $\varphi_{\sigma}$
\begin{align*}
	\bbE_{P_{\bX}}[\varphi_{\sigma}(\bx)(\theta)] & = \bbE_{P_{\bX}}\left[\frac{1}{\sigma^{3}}(\langle \beta, x\rangle - y)^{2} - \frac{1}{\sigma}\right] = \frac{1}{\sigma^{3}}\sum_{n=1}^{N}(\langle \beta, x_{n}\rangle - y_{n})^{2} + \frac{N}{\sigma}, \\
	\bbE_{P_{w,\bZ}}[\varphi_{\sigma}(\bx)(\theta)] & = \bbE_{P_{w,\bZ}}\left[\frac{1}{\sigma^{3}}(\langle \beta, x\rangle - y)^{2} - \frac{1}{\sigma}\right] = \frac{1}{\sigma^{3}}\sum_{m=1}^{M}w_{m}(\langle \beta, z_{m}\rangle - u_{m})^{2} + \frac{\sum_{m=1}^{M}w_{m}}{\sigma}.
\end{align*}
The fact that $N=\sum_{m=1}^{M}w_{m}$ is established by the derivations above which means the second terms in the two expressions above are equal. Since this equality holds when intergrating over all of $\pi_{\bX}$ it means it holds in particular over a ball around $\beta = 0$. This means that $\frac{1}{\sigma^{3}}\sum_{n=1}^{N}y_{n}^{2} = \frac{1}{\sigma^{3}}\sum_{m=1}^{M}w_{m}u_{m}^{2}$ for all $\sigma$ in the support of $\pi_{\bX}$, which is fully supported and so the third equivalance in Lemma \ref{lem:reco_vals} is proved.

\end{document}